\documentclass[11pt]{article}

\usepackage{fullpage}

\usepackage{graphicx} 
\usepackage{tikz}
\usetikzlibrary{positioning,arrows,arrows.meta}

\usepackage{amssymb, amsmath, amsthm, amsfonts, bbm}
\usepackage{url,graphicx,caption,subcaption}
\usepackage[style=alphabetic,natbib=true,maxbibnames=10]{biblatex}

\usepackage[colorlinks,linkcolor=blue!50!black,citecolor=blue!70!black,urlcolor=black]{hyperref}
\usepackage{boxedminipage}
\usepackage[capitalize,noabbrev]{cleveref}
\usepackage{array}
\usepackage{makecell}   
\usepackage{booktabs}
\usepackage{algorithm}
\usepackage{algpseudocode}
\usepackage{mathtools}

\newcolumntype{M}[1]{>{\centering\arraybackslash}m{#1}}
\usepackage{xspace}

\newtheorem*{theorem*}{Theorem}
\newtheorem{theorem}{Theorem}

\newtheorem{lemma}{Lemma}

\newtheorem{corollary}{Corollary}

\newtheorem{definition}{Definition}
\newtheorem{example}{Example}

\crefname{theorem}{Theorem}{Theorems}
\crefname{lemma}{Lemma}{Lemmas}
\crefname{corollary}{Corollary}{Corollaries}
\crefname{proposition}{Proposition}{Propositions}
\crefname{definition}{Definition}{Definitions}
\crefname{example}{Example}{Examples}
\crefname{remark}{Remark}{Remarks}
\crefname{assumption}{Assumption}{Assumptions}
\crefname{fact}{Fact}{Facts}
\crefname{result}{Result}{Results}
\crefname{problem}{Problem}{Problems}
\crefname{question}{Question}{Questions}

\newcommand{\R}{\mathbb{R}}  
\newcommand{\E}{\mathbb{E}}
\newcommand{\cX}{\mathcal{X}}

\newcommand{\cY}{\mathcal{Y}}
\newcommand{\cN}{\mathcal{N}}
\newcommand{\cO}{\mathcal{O}}
\newcommand{\cF}{\mathcal{F}}

\newcommand{\cZ}{\mathcal{Z}}

\newcommand{\cD}{\mathcal{D}}
\newcommand{\eps}{\epsilon}

\newcommand{\Reg}{\mathrm{Regret}}
\newcommand{\regret}{\Reg}

\newcommand{\thetaPS}{\theta_{\mathrm{PS}}}
\newcommand{\muPS}{\mu_{\mathrm{PS}}}
\newcommand{\thetaPO}{\theta_{\mathrm{PO}}}

\renewcommand{\eps}{\varepsilon}

\newcommand{\defeq}{\coloneqq}
\DeclareMathOperator*{\argmin}{arg\,min}
\usepackage{xcolor}

\newcommand{\PR}{\mathrm{PR}}      
\newcommand{\DPR}{\mathrm{DPR}}

\def \endprf{\hfill {\vrule height6pt width6pt depth0pt}\medskip}
\renewenvironment{proof}{\noindent {\bf Proof} }{\endprf\par}

\begin{document}

\title{The Stability of Online Algorithms in Performative Prediction}
\author{Gabriele Farina  \\MIT \\ \url{gfarina@mit.edu} \and Juan Carlos Perdomo \\ MIT, NYU \\ \url
{j.perdomo.silva@nyu.edu}}

\date{\today}

\maketitle
\begin{abstract}
The use of algorithmic predictions in decision-making leads to a feedback loop where the models we deploy actively influence the data distributions we see, and later use to retrain on. This dynamic was formalized by \cite{perdomo2020performative} in their work on performative prediction. Our main result is an unconditional reduction showing that any no-regret algorithm deployed in performative settings converges to a (mixed) performatively stable equilibrium: a solution in which models actively shape data distributions in ways that their own predictions look optimal in hindsight. 
Prior to our work, all positive results in this area imposed strong restrictions on how models influenced distributions. By using a martingale argument and allowing randomization, we avoid any assumption on how populations respond to predictions and sidestep recent hardness results showing that deterministic stable models are in general PPAD-hard to compute. Lastly, on a more conceptual note, our connection sheds light on why common algorithms, like gradient descent, are naturally stabilizing and prevent runaway feedback loops. We hope our work enables future technical transfer of ideas between online optimization and performativity.
\end{abstract}

\begingroup
\setlength{\parskip}{0pt}
\tableofcontents
\endgroup

\section{Introduction}

Social prediction is an inherently dynamic affair. The predictions we make inform decisions that actively shape the data we see. These influenced outcomes are then fed back into the datasets used to train models, leading to a continual feedback loop between prediction systems and their encompassing social contexts. 

Once recognized, we start to see this impact, or \emph{performativity}, of social prediction everywhere. Predicting individuals' health outcomes influences their behavior and hence their realized outcomes. A bank's choice of credit risk model influences the features that people strategically report to the system. Link prediction algorithms in professional networks determine who we know, and hence what links will end up forming in the future. The reader can certainly come up with more examples. 

\citet*{perdomo2020performative} introduced a formal, learning-theoretic framework called \emph{performative prediction} to reason about learning algorithms in these dynamic environments. In their work, they identify a natural solution concept termed \emph{performative stability}. A model is performatively stable if it actively shapes data in a way that its own predictions look optimal in hindsight. 

The motivation behind stability is that it prevents a runaway feedback loop where learners repeatedly update their models because they are not risk-minimizing on the observed data.
In more detail, performativity often manifests as an unexpected distribution shift. Decision-makers often respond to this shift by \textit{retraining} their models on the most recent examples. Performatively stable models have the desirable property that they are \textit{fixed points} of retraining: if you deploy a performatively stable model and then retrain on its induced distributions, you get the same model back. There is no incentive to deviate, see \Cref{fig:stability} for a visual illustration. 

It seems intuitive that to guarantee stability, one needs tight control over whether the retraining has any fixed points. And indeed, there is now a rich literature on performativity showing that if $a)$ the learner's loss function is pointwise strongly convex $b)$ and populations are only weakly influenced by predictions, then retraining is a contraction (\emph{i.e.}, the feedback loop has Lipschitz constant $<$ 1 and hence converges to stability). While both of these conditions severely limit the regimes where we can guarantee stability, they are, in a precise sense, unavoidable. If losses are not strongly convex, or predictions are strongly performative, then finding a single model that is performatively stable can be impossible (they may not exist) or PPAD-hard to compute \citep{anagnostides2026computational}.

However, in a fortunate twist, these difficulties can be easily avoided by simply allowing the learner to randomize across a small number of models. Our main contribution is an unconditional reduction from regret minimization to performative stability. We show that any learner that uniformly randomizes across the iterates produced by any no-regret algorithm is provably stable. This holds even when making \emph{no assumptions} about the distributional response: predictions can influence both features and outcomes in arbitrary ways. Some of the consequences of this result are that we establish the first positive results for finding stable points for losses such as squared loss or log-loss that are convex but not pointwise strongly convex, as well as the first polynomial sample complexity guarantees for repeated risk minimization. Furthermore, all of these results also carry over to the multi-player setting (see \Cref{sec:multiplayer}), where the observed data is a function of many agents making predictions simultaneously.

Taken together, our work significantly expands the space of problems for which one can efficiently achieve performative stability and highlights how classical supervised learning algorithms are naturally equilibrating when deployed in these more dynamic domains.  Looking forward, we hope our results will inspire future work at the intersection of performativity and online learning.

\begin{figure}[t]
    \centering
    \scalebox{.96}{\begin{tikzpicture}
        \node[rounded corners=.5mm,thick,draw,align=center,text width=2.6cm] (DATA) at (0,0) {\small Model influences\\ data $z_t \sim \cD(\theta_t)$};
        \node[rounded corners=.5mm,thick,draw,align=center,text width=2.6cm] (SAMPLE) at (-2.6,-2.5) {\small Learner samples\\ model $\theta_t \sim \mu_{\text{PS}}$};
        \node[rounded corners=.5mm,thick,draw,align=center,text width=2.0cm] (LOSS) at (2.3, -2.5) {\small Loss $\ell(z_t,\theta_t)$\\ is incurred};

        \draw[semithick,->,shorten >=1mm,shorten <=1mm] (SAMPLE) edge[bend left] (DATA);
        \draw[semithick,->,shorten >=1mm,shorten <=1mm] (DATA) edge[bend left] (LOSS);
        \draw[semithick,->,shorten >=1mm,shorten <=1mm] (LOSS) edge[] (SAMPLE);

        \node[fill=cyan!15, align=center,inner sep=2mm, rounded corners, text width=8.3cm,anchor=west] at (4.4, -1.2) {
        Informally, the \emph{mixture} $\mu_\text{PS}$ is $\epsilon$-performatively stable if for any other model $\theta'$, in expectation,
        \[
        \frac{1}{n}\sum_{t=1}^n \ell(z_t ;\theta_t) \leq  \frac{1}{n}\sum_{t=1}^n \ell(z_t ;\theta')
        \]
        That is, no other model has lower loss over the data induced by $\mu$.
        };
    \end{tikzpicture}}
    \caption{Visualizing the performative stability guarantee. A mixture $\mu$ over models $\theta$ is stable if it shapes the data in a way that its own predictions are risk minimizing. Our notion,  \Cref{def:sensitivity}, generalizes the prior notion where $\mu$ is a point mass on a single model.}
    \label{fig:stability}
\end{figure}

\subsection{Contributions}

To discuss our results, we give a brief overview of the performative prediction framework.  

Performative prediction extends supervised learning to account for the fact that deploying predictive models actively shapes the distribution of future data. More formally, if the learner makes predictions using a model $f_\theta$ where $\theta\in \R^d$ denotes a vector of parameters, then they see examples $z \sim \cD(\theta)$ drawn i.i.d. from the distribution $\cD(\theta)$ induced by the model.  $\cD(\cdot)$ is the distribution map, a function from model parameters $\theta$ to distributions over points in a set $\cZ$. Typically, we let $z=(x,y)$ consist of features $x \in \cX$ and outcomes $y \in \cY$. 
In this case, the distribution map $\cD(\cdot)$ captures how deploying models can influence both the marginal distribution over features and the conditional distribution over outcomes. Performative prediction therefore strictly subsumes the setting known as strategic classification \citep{hardt2016strategic}.

Given the dynamic effects of prediction, the following stability notion is one of the key desiderata within performativity:

\begin{definition}\label{def:ps}
Given a loss function $\ell(z;\theta)$, a model $\thetaPS$ is $\eps$-performatively stable if 
\begin{align*}
    \E_{z \sim \cD(\thetaPS)} \ell(z; \thetaPS) \leq \min_{\theta}  \E_{z \sim \cD(\thetaPS)} \ell(z; \theta) + \eps.
\end{align*}
More generally, we say that a mixture $\muPS$ over models $\theta$ is $\eps$-performatively stable if 
\begin{align*}
    \E_{\theta \sim \muPS} \E_{z \sim \cD(\theta)} \ell(z;\theta) \leq \min_{\theta' \in \Theta} \E_{\theta \sim \muPS} \E_{z \sim \cD(\theta)} \ell(z; \theta') + \eps.  
\end{align*}    
\end{definition}
Performatively stable models are the fixed points of the following retraining procedure,
\begin{align}
\label{eq:retraining}
    \theta_{t+1} = \argmin_{\theta} \E_{z \sim \cD(\theta_t)} \ell(z;\theta).
\end{align}
This is a natural dynamic that arises in recommender systems, strategic classification, and even traffic prediction, where new models are trained on data that was influenced by previous models.

Our main result shows that any online algorithm that guarantees sublinear external regret against the stochastic losses $\ell(z_t; \theta_t)$ will, in fact, converge to performative stability. This result holds regardless of the underlying distribution map $\cD(\cdot)$. See \Cref{fig:main_theorem} for a visual illustration.

\begin{figure}[t]
\centering
\scalebox{.95}{
\begin{tikzpicture}
    \node[draw,fill=black!2,very thick,rounded corners,inner xsep=1mm,inner ysep=2mm,text width=3.7cm,align=center] (OA) at (0,0) {Online algorithm\\ {\small\color{gray} (e.g., gradient descent)}};
    \node[draw,semithick,inner xsep=0mm,inner ysep=2mm,text width=3.6cm,align=center] (ZT) at (5.3,0) {Deploy $\theta_t$ and observe $z_t\sim\cD(\theta_t)$};

    \draw[->,semithick,shorten <=1mm, shorten >=1mm] (OA) -- node[pos=.5,above,text width=2cm,align=center]{Model\\$\theta_{t}$} +(ZT);
    \draw[->,semithick,rounded corners=.7mm,shorten <=1mm, shorten >=1mm] (ZT.east) -- ++(.5,0) -- ++(0,-2.5) --node[pos=0.5,align=center,above,text width=5cm]{Train with new loss\\ $\ell(\cdot; z_t)$} ++(-10.2,0) |- (OA);
    
    
    \node[fill=green!15, align=center,inner xsep=0mm, inner ysep=4mm,rounded corners,text width=1.15*5.7cm] at (11.5, -0.9) {\textbf{Main Theorem} (Informal):\\[2mm]For any $T \geq 0$ and $\cD(\cdot)$, the uniform distribution over the models $\theta^1, \dots, \theta^T$ produced by the online algorithm is $\mathrm{Regret}(T)/T$-stable.};
\end{tikzpicture}}
\caption{Our main result shows that any no-regret algorithm yields a sequence of models whose uniform mixture is performatively stable. The idea is simple and surprisingly strong. It sidesteps recent hardness results and establishes the first proofs of stability that do not require any continuity assumptions on $\cD(\cdot)$ or curvature assumptions on $\ell$.}
\label{fig:main_theorem}
\end{figure}

\begin{theorem}[Informal]
Let $\theta_1, \dots, \theta_T$ be models produced by an online algorithm that achieves no regret on the sequence of losses $\ell_t(\theta) = \ell(z_t, \theta)$ where $z_t \sim \cD(\theta_t)$ for $t=1$ to $T$. And, define $\mu$ to be the uniform distribution over $\theta_1, \dots, \theta_T$. Then, $\mu$ is $\Reg(T)/T$-performatively stable.
\end{theorem}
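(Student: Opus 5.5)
The argument compares the realized losses seen by the online algorithm with their conditional expectations under the distributions the iterates induce, and controls the gap with martingale concentration. Let $\cF_{t-1}$ be the $\sigma$-algebra generated by $z_1,\dots,z_{t-1}$ and any internal randomness of the algorithm up to round $t$. Then $\theta_t$ is $\cF_{t-1}$-measurable, because the learner commits to $\theta_t$ before seeing $z_t$. Define the population risk
\[
R(\theta_t,\theta') \defeq \E_{z\sim\cD(\theta_t)} \ell(z;\theta'),
\]
so that $\E[\ell(z_t;\theta')\mid \cF_{t-1}] = R(\theta_t,\theta')$ for every fixed $\theta'$. With $\mu$ the uniform mixture over $\theta_1,\dots,\theta_T$, the stability gap in \Cref{def:ps} is
\[
\frac1T\sum_{t=1}^T R(\theta_t,\theta_t) \;-\; \min_{\theta'}\frac1T\sum_{t=1}^T R(\theta_t,\theta').
\]

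I split this gap into three parts. The first is a \emph{learner term}:
\[
\frac1T\sum_t \bigl(R(\theta_t,\theta_t)-\ell(z_t;\theta_t)\bigr).
\]
The second is the \emph{regret term}:
\[
\frac1T\Bigl(\sum_t \ell(z_t;\theta_t) - \min_{\theta'}\sum_t \ell(z_t;\theta')\Bigr) = \Reg(T)/T.
\]
The third is a \emph{comparator term}:
\[
\min_{\theta'}\frac1T\sum_t \ell(z_t;\theta') - \min_{\theta'}\frac1T\sum_t R(\theta_t,\theta').
\]

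The learner term is an average of martingale differences $D_t = R(\theta_t,\theta_t)-\ell(z_t;\theta_t)$. These have zero mean given $\cF_{t-1}$, because $\theta_t$ is predictable. So the term has expectation exactly zero, and with bounded losses (say in $[0,1]$) Azuma--Hoeffding bounds it by $O(\sqrt{\log(1/\delta)/T})$ with high probability.

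The comparator term is where the real difficulty lies. In expectation it is easy to handle: the minimum of an average is at most the average evaluated at any fixed $\theta'$. Hence
\[
\E\Bigl[\min_{\theta'}\frac1T\sum_t \ell(z_t;\theta')\Bigr] \le \min_{\theta'}\E\Bigl[\frac1T\sum_t \ell(z_t;\theta')\Bigr] = \min_{\theta'}\E\Bigl[\frac1T\sum_t R(\theta_t,\theta')\Bigr].
\]
This bounds the comparator term in expectation only in the weaker sense where the minimum sits outside the expectation over the random mixture. But $\mu$ is itself random, and the statement asks for a guarantee for the realized $\mu$, with the min inside.

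To get a guarantee for the realized $\mu$, I need uniform concentration: the bound
\[
\sup_{\theta'}\Bigl|\frac1T\sum_t\bigl(\ell(z_t;\theta')-R(\theta_t,\theta')\bigr)\Bigr|
\]
must hold simultaneously over all comparators. For each fixed $\theta'$ this is again a martingale average. The plan is to apply Azuma pointwise and then union bound over an $\eps$-net of $\Theta$. This assumes $\Theta\subset\R^d$ is bounded with diameter $B$ and $\ell$ is $L$-Lipschitz in $\theta$, giving an extra error of order $\sqrt{d\log(BLT/\delta)/T}$. For finite $\Theta$ one can use a direct union bound instead, and in general sequential Rademacher complexity would do the same job.

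Combining the three pieces, with high probability $\mu$ is $\bigl(\Reg(T)/T + \tilde O(\sqrt{d/T})\bigr)$-stable. This matches the informal statement up to the statistical term, which vanishes as $T\to\infty$ and is dominated by $\Reg(T)/T$ for standard no-regret algorithms such as online gradient descent, where $\Reg(T)=O(\sqrt T)$. The main obstacle I expect is exactly this uniform-over-$\theta'$ martingale concentration, since the data is adaptively generated and is not i.i.d.

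Alternatively, if the intended formal statement is in expectation, with the comparator chosen after averaging as suggested by the figure's phrase ``in expectation'', the one-line Jensen/min-exchange argument above suffices. In that case the bound is exactly $\E[\Reg(T)]/T$ with no extra terms.
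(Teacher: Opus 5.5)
Your proposal is correct and is essentially the paper's proof. Your closing in-expectation argument (the learner deviation has mean zero because $\theta_t$ is predictable, realized regret bounds the middle term, and $\min$ and $\E$ are exchanged for the comparator) proves exactly the paper's formal in-expectation statement. Your net, Azuma, union-bound and Lipschitz step is its high-probability statement; the only cosmetic difference is that the paper folds your learner and comparator deviations into a single martingale difference $V_t(\theta_\star)$ for each fixed comparator.

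One small correction to a side remark: the statistical term is not dominated by $\Reg(T)/T$. For online gradient descent it exceeds the dimension-free $\tfrac32 GD/\sqrt{T}$ by roughly a $\sqrt{d\log T}$ factor, and for logarithmic-regret methods it dominates outright. This is why the clean $\Reg_T/T$ rate is only claimed in expectation.
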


It is by now a celebrated fact that online algorithms have good generalization properties in supervised learning \citep{cesa2004generalization}. Here, we show that when data is no longer drawn i.i.d. from a fixed distribution, but is instead \emph{performative} and influenced by the chosen model, no-regret dynamics have the desirable property that they converge to a natural notion of stability. 

As we will discuss, this connection significantly expands the kinds of problems for which one can guarantee performative stability. As easy corollaries of this main result, we can show that by taking a mixture over their iterates, common no-regret algorithms such as gradient descent and retraining (follow the leader) converge to performative stability under substantially weaker assumptions on the loss function $\ell$ and the distribution map $\cD(\cdot)$ than those previously used in the performative prediction literature. Readers already familiar with prior work can jump to \Cref{table:comparison} for a summarized comparison.

Crucially, earlier analyses required that predictions have only a very weak, smoothly varying influence on the induced distribution. Formally, they assumed that $\cD(\cdot)$ is Lipschitz (see \Cref{def:sensitivity}). More than just a technical condition, this assumption ruled out stability in important domains such as medicine or education, where people take actions that influence outcomes by thresholding model predictions (see, e.g., \citet{dews} and \Cref{sec:preliminaries} for an example). 

Moreover, recent work by \citet*{anagnostides2026computational} showed that finding a single stable model even for Lipschitz $\cD(\cdot)$ and strongly convex and smooth $\ell$ is PPAD-complete.
By working with mixtures over models, we sidestep these complexity barriers, greatly expand the settings in which stability can be achieved efficiently, and provide a simple theoretical justification for why widely used learning practices prevent runaway feedback loops in social prediction.

\subsection{Related Work}\label{sec:related}

\paragraph{Performative prediction.} \citet*{perdomo2020performative} introduced the performative prediction framework and established that retraining, as described in \Cref{eq:retraining}, converged to performative stability if losses were smooth, strongly convex, and if the distribution map was sufficiently Lipschitz. Subsequent work \citep{mendler2020stochastic,drusvyatskiy2023stochastic} proved finite-sample results showing that various stochastic optimization algorithms, such as gradient descent, also converged to stability under similar assumptions on the loss and distribution map. Later, \citet*{taori2023data} and \citet*{mofakhami2023performative} analyzed convergence to stability under alternative Lipschitz conditions on $\cD(\cdot)$. 

In addition to these, there has been work analyzing performative stability in stateful settings \citep{brown2022performative,li2022state,ray2022decision,wood2021online} where the induced distribution is a function of the entire history of deployed models, as well as in multi-agent settings where data depends on the predictions made by multiple decision-makers \citep{li2022multi,piliouras2023multi,narang2023multiplayer}. All of these results also imposed strong Lipschitzness conditions on the distribution map $\cD(\cdot)$.

Lastly, there is a parallel line of work in this area that designs algorithms which converge to a different solution concept called \emph{performative optimality} \citep{miller2021outside,izzo2021learn,jagadeesan2022regret,kim2022making,pluginpp}. We discuss the differences between these two concepts in the next section and point the reader to \cite{hardt2023performative} for a broader survey of this area.

\begin{table}[t]
\centering
\begin{tikzpicture}[x=1.08cm,y=1cm]
    \tikzstyle{nn}=[,anchor=north west,minimum height=1.15cm,inner xsep=1.08*0.2cm, inner ysep=0,rounded corners=.07cm];
    \tikzstyle{c1}=[text width=1.08*2.5cm];
    \tikzstyle{c2}=[text width=1.08*2.0cm];
    \tikzstyle{c3}=[text width=1.08*2.5cm];
    \tikzstyle{c4}=[text width=1.08*2.5cm];
    \tikzstyle{c5}=[text width=1.08*3.2cm];
    \tikzstyle{good}=[fill=green!10];
    \tikzstyle{rbad}=[fill=red!10];
    \tikzstyle{bad}=[fill=red!10];
    \tikzstyle{low}=[text height=1.0*0.7cm];

    \draw[line width=.3mm] (0,-.8) -- +(15.2,0);
    \draw[               ] (0,-2.08) -- +(15.2,0);
    \draw[line width=.3mm] (0,-3.4) -- +(15.2,0);

    \foreach \x in {2.95,5.45,8.45,11.45} {
        \draw[line width=.1mm,black!20]
              (\x, 0) -- ++(0,-.7)
            ++(0,-.2) -- ++(0,-1.08)
            ++(0,-.2) -- ++(0,-1.1);
    }
    
    \node[nn,low,c1,align=right] at (0,.4) {Solution};

    \node[nn,c1,align=right] at (0,-.85) {Stable Model\\ $\theta_\text{PS}$};

    \node[nn,c1,align=right] at (0,-2.15) {Stable Mixture\\ $\mu_\text{PS}$ (this work)};

    \node[nn,low,c2,align=center] at (3,.4) {$\rho < 1$};

    \node[nn,c2,align=center] at (3,-.85) {$\cO(1/T)$\\ $\dagger$};

    \node[nn,good,c2,align=center] at (3,-2.15) {$\tilde{\cO}(1/T)$};

    \node[nn,low,c3,align=center] at (5.5,.4) {$\rho > 1 + \Omega(1)$};

    \node[nn,bad,c3,align=center] at (5.5,-.85) {{\small PPAD-Complete}\\ $\ddagger$};

    \node[nn,good,c3,align=center] at (5.5,-2.15) {$\tilde{\cO}(1/T)$};

    \node[nn,c4,align=center] at (8.5,.4) {Convex losses,\\arbitrary $\cD(\cdot)$};

    \node[nn,rbad,c4,align=center] at (8.5,-.85) {\small Not guaranteed\\ to exist};

    \node[nn,good,c4,align=center] at (8.5,-2.15) {${\cO}(1/\sqrt T)$};

    \node[nn,c5,align=center] at (11.5,.4) {Squared or log\\loss, arbitrary $\cD(\cdot)$};

    \node[nn,rbad,c5,align=center] at (11.5,-.85) {\small Not guaranteed\\ to exist};

    \node[nn,good,c5,align=center] at (11.5,-2.15) {$\tilde{\cO}(1/T)$};
\end{tikzpicture}
\caption{Comparison to prior work. We let $\rho = \eps\beta /\gamma$ to denote the ``standard'' setting where $\ell$ is $\gamma$-strongly convex and $\beta$-smooth in $\theta$, and $\cD(\cdot)$ is $\eps$-Lipschitz. Our reduction establishes the first known stability guarantees for arbitrary distribution maps $\cD(\cdot)$, and weakly convex losses, while matching the rates in prior work up to log factors.
$\dagger$: \citet{mendler2020stochastic,drusvyatskiy2023stochastic}. $\ddagger$: \citet{anagnostides2026computational}.}
\label{table:comparison}
\end{table}

\paragraph{No-regret learning.} Our reduction places performative stability within a broader literature connecting regret minimization to other areas. In game theory, regret-minimizing dynamics are closely related to equilibrium concepts such as coarse correlated and correlated equilibria \citep{hart2000simple,dagan2023external,peng2024fast}, Nash equilibria in zero-sum games, and more general $\Phi$-equilibria \citep{gordon2008no}. These connections have been studied in a range of settings, including normal-form games, extensive-form games \citep{Farina22:Simple,farina2023polynomial}, harmonic games \citep{legacci2024no}, and concave games \citep{daskalakis2025efficient}. 
In a similar vein, calibrated forecasting can also be famously reduced to no-regret learning~\citep{foster1997calibrated,foster1998asymptotic}. Blackwell’s approachability theorem has been shown to be algorithmically equivalent to online linear optimization, yielding a tight reduction between approachability and no-regret learning \citep{abernethy2011blackwell}. 

In nonlinear optimization, recent reductions from nonsmooth, nonconvex optimization (in the sense of Goldstein stationarity) to online learning have led to optimal stochastic algorithms \citep{cutkosky2023optimal}.
In statistics and probability theory, sequential testing and anytime-valid inference can be reduced to no-regret algorithms, giving rise to e-processes and confidence sequences \citep{orabona2016coin,ramdas2023gametheoretic}. Deterministic regret inequalities have been shown to be equivalent to sharp martingale tail bounds and associated concentration inequalities \citep{rakhlin2017equivalence}.
Finally, multiplicative-weights and related online learning algorithms yield conceptually simple proofs of von Neumann's minimax theorem \citep{papadimitriou2008computing,farina2024polynomial,sigecom}, spectral sparsification in graph theory \citep{allen2015spectral}, and certain results in derandomization \citep{trevisan19chernoff}.
Our result contributes to this line of work by showing that performative stability can also be understood through the lens of no-regret learning.

\section{Preliminaries and Technical Background}

\label{sec:preliminaries}
Before presenting our main results, we provide some further context on definitions and prior results in performative prediction.
In addition to performative stability, there is a second main solution concept in performative prediction known as \emph{performative optimality}.

\begin{definition}
A model $\thetaPO$ is performatively optimal if,
\begin{align*}
    \E_{z \sim \cD(\thetaPO)} \ell(z; \thetaPO) \leq \min_{\theta}  \E_{z \sim \cD(\theta)} \ell(z; \theta).
\end{align*}
\end{definition}
Here, $\PR(\theta)=\E_{z \sim \cD(\theta)} \ell(z; \theta)$ is known as the \emph{performative risk} of the model specified by $\theta$. Optimal models minimize the performative risk. If we denote by $\DPR(\theta, \cD(\theta')) = \E_{z\sim \cD(\theta')} \ell(z ; \theta)$ the \emph{decoupled} performative risk, we can relate optimality and stability as follows,
\begin{align*}
   \text{\underline{Stability:}} \quad \PR(\thetaPS) \leq \min_{\theta} \DPR(\theta,\cD(\thetaPS)) ; \quad \qquad  \text{\underline{Optimality:}} \quad  \PR(\thetaPO) \leq \min_{\theta} \DPR(\theta,\cD(\theta)).
\end{align*}
Both solution concepts are generally distinct. \cite{perdomo2020performative} showed how stable models need not be performatively optimal and optimal models need not be stable. 

Given its connection to the retraining feedback loops we discussed in the introduction, the question of when and why various algorithms converge to stability has been a main focus of the performative prediction literature. The bulk of prior work analyzed convergence to performative stability under the following Lipschitz condition on the distribution map:
\begin{definition}
\label{def:sensitivity}
    A distribution map $\cD(\cdot)$ is $\eps$-sensitive (or Lipschitz) if for all $\theta, \theta'$
    \begin{align*}
        \mathcal{W}_1\big(\cD(\theta), \cD(\theta')\big)\leq \eps \cdot \|\theta - \theta'\|_2.
    \end{align*}
Here, $\mathcal{W}_1$ is the earthmover's or Wasserstein-1 distance between the distributions $\cD(\theta)$ and $\cD(\theta')$.
\end{definition}
Intuitively, the sensitivity condition formalizes the notion that small changes in predictions lead to small changes in the induced distribution. A canonical example where this assumption holds is in models of strategic classification \citep{hardt2016strategic}, where people manipulate their features to game a classifier. In particular, some works assume that if a linear classifier with parameters $\theta$ is deployed, individuals modify their features from $x$ to $x - \eps \theta$ in an effort to move to the positive side of the decision boundary. 
Using this sensitivity condition (\Cref{def:sensitivity}), prior work in this area established the following result.

\begin{theorem}[\cite{perdomo2020performative,mendler2020stochastic}]
\label{theorem:prior}
Assume that $\ell(z;\theta)$ is $\beta$-smooth, and $\gamma$-strongly convex in $\theta$ for every $z$. Furthermore, assume that the distribution map $\cD(\cdot)$ is $\eps$-sensitive  with $\rho=  \eps \beta /\gamma < 1$. Then, retraining, 
\begin{align*}
    \theta_{t+1} = \argmin_{\theta} \E_{z \sim \cD(\theta_t)} \ell(z; \theta)
\end{align*}
converges to a unique stable point $\thetaPS$ at a linear rate, $\|\theta_T - \thetaPS\| \leq \cO(\exp(-T))$. Additionally, if $\nabla_\theta \ell(z;\theta)$ is $\beta$-Lipschitz as a function of $z$ for every $\theta$, then stochastic gradient descent, 
\begin{align*}
    \theta_{t+1} = \theta_t - \eta_t \nabla_\theta(z_t, \theta_t), \quad z_t \sim \cD(\theta_t),
\end{align*}
converges to a unique stable point at a polynomial rate, $\E\|\theta_t - \thetaPS\|^2 \leq \cO(1/T)$.
\end{theorem}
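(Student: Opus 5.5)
Both claims come down to one quantity. Let $G(\theta) \defeq \argmin_{\theta'} \DPR(\theta', \cD(\theta))$ denote the retraining map of \Cref{eq:retraining}. The first claim will follow from showing that $G$ is a $\rho$-contraction and applying Banach's fixed point theorem. The second will follow from a recursion for $\E\|\theta_t - \thetaPS\|^2$.

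For the contraction, fix $\theta, \theta'$ and set $\phi = G(\theta)$ and $\phi' = G(\theta')$.
\begin{itemize}
\item By first-order optimality, $\E_{z\sim\cD(\theta)} \nabla_\theta \ell(z;\phi) = 0$, and likewise $\E_{z\sim\cD(\theta')} \nabla_\theta \ell(z;\phi') = 0$.
\item By $\gamma$-strong convexity of $\theta \mapsto \E_{z\sim\cD(\theta)}\ell(z;\theta)$ (integrate the pointwise inequality),
\begin{align*}
\gamma\|\phi-\phi'\|^2 \le -(\phi-\phi')^\top \E_{z\sim\cD(\theta)}\nabla_\theta\ell(z;\phi').
\end{align*}
\item The right-hand side equals the difference $\E_{\cD(\theta')} g(z) - \E_{\cD(\theta)} g(z)$, where $g(z) = (\phi-\phi')^\top \nabla_\theta \ell(z;\phi')$. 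This uses the second optimality condition.
\item The map $z \mapsto \nabla_\theta\ell(z;\phi')$ is Lipschitz in $z$ with constant $\beta$. This is the standard reading of ``$\beta$-smooth'' in this literature, namely joint smoothness in $(z,\theta)$, and it is the assumption stated explicitly in the second half of the theorem. Hence $g$ is $\beta\|\phi-\phi'\|$-Lipschitz.
\item Kantorovich--Rubinstein duality together with $\eps$-sensitivity then bounds the difference by $\beta\|\phi-\phi'\|\,\eps\|\theta-\theta'\|$.
\end{itemize}
Dividing through gives $\|G(\theta)-G(\theta')\| \le \rho\|\theta-\theta'\|$. Since $\rho<1$, Banach's theorem gives a unique fixed point. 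A fixed point of $G$ is exactly a performatively stable point, so this is the unique $\thetaPS$. Banach also gives $\|\theta_T-\thetaPS\| \le \rho^T\|\theta_0-\thetaPS\|$.

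For SGD, I will compare the stochastic update with the gradient at the stable point under the current distribution. Write $\E_t$ for conditioning on $\theta_t$, and let $g_t = \E_{z\sim\cD(\theta_t)} \nabla\ell(z;\theta_t)$. The steps are as follows.
\begin{itemize}
\item Strong monotonicity of $\nabla_\theta \E_{\cD(\theta_t)}\ell$ gives
\begin{align*}
(\theta_t-\thetaPS)^\top\big(g_t - \E_{\cD(\theta_t)}\nabla\ell(z;\thetaPS)\big) \ge \gamma\|\theta_t-\thetaPS\|^2.
\end{align*}
\item By the same Kantorovich--Rubinstein argument as above, and since $\E_{\cD(\thetaPS)}\nabla\ell(z;\thetaPS)=0$, the inner product of $\theta_t - \thetaPS$ with $\E_{\cD(\theta_t)}\nabla\ell(z;\thetaPS)$ is at least $-\eps\beta\|\theta_t-\thetaPS\|^2$.
\item Together these give $(\theta_t-\thetaPS)^\top g_t \ge (\gamma-\eps\beta)\|\theta_t-\thetaPS\|^2$, and $\gamma - \eps\beta > 0$ because $\rho<1$.
\item Expanding the square yields
\begin{align*}
\E_t\|\theta_{t+1}-\thetaPS\|^2 \le \big(1-2\eta_t(\gamma-\eps\beta)\big)\|\theta_t-\thetaPS\|^2 + \eta_t^2\,\E_t\|\nabla\ell(z_t;\theta_t)\|^2.
\end{align*}
\end{itemize}

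The main obstacle is the last term, the second moment of the stochastic gradient. I would assume a variance bound of the form $\E\|\nabla\ell\|^2 \le \sigma^2 + L^2\|\theta-\thetaPS\|^2$. Such a bound follows, for example, from $\beta$-smoothness and bounded gradient variance at $\thetaPS$ across the $\cD(\theta)$. The $L^2$ part is absorbed for small $\eta_t$. With step size $\eta_t = \Theta\big(1/((\gamma-\eps\beta)t)\big)$, the standard Chung-type induction on the recursion then gives $\E\|\theta_T-\thetaPS\|^2 = \cO(1/T)$.
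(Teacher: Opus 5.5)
The paper gives no proof of this theorem. It is imported from \citet{perdomo2020performative} and \citet{mendler2020stochastic}, and your argument is essentially the one in those sources, so it is correct under the hypotheses you made explicit. Your contraction of the retraining map (first-order optimality, strong convexity, Kantorovich--Rubinstein duality) is the proof in \citet{perdomo2020performative}. Your inequality $(\theta_t-\thetaPS)^\top g_t\ge(\gamma-\eps\beta)\|\theta_t-\thetaPS\|^2$, combined with a second-moment bound and a Chung-type recursion, is the greedy SGD analysis in \citet{mendler2020stochastic}.

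You were right to add the two extra hypotheses, because the statement as transcribed is false without them.
\begin{itemize}
\item For retraining, take $\ell(z;\theta)=\tfrac12(\theta-\mathbf{1}\{z\le\eps/2\})^2$ and $\cD(\theta)=\delta_{\eps\theta}$. The loss is $1$-smooth and $1$-strongly convex in $\theta$ for every $z$, and $\cD$ is $\eps$-sensitive with $\rho=\eps<1$. Yet your $G(\theta)=\mathbf{1}\{\theta\le 1/2\}$ has no fixed point. So Lipschitzness of $\nabla_\theta\ell$ in $z$ is needed already for the first claim.
\item For SGD, take $\eps=0$, $\ell(z;\theta)=\tfrac12(\theta-z)^2$, and $z$ with finite mean but infinite variance. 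Every SGD iterate after the first then has $\E\|\theta_t-\thetaPS\|^2=\infty$. So some second-moment assumption like yours is unavoidable.
\end{itemize}

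The one detail to watch is the step-size constant. After absorbing the $L^2$ term, the recursion should contract by a factor $1-c/t$ with $c>1$. For example, take $\eta_t=2/((\gamma-\eps\beta)(t+t_0))$ with $t_0\gtrsim L^2/(\gamma-\eps\beta)^2$. With $c=1$ the recursion only gives $\cO(\log T/T)$.
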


In other words, if the loss function is well-conditioned (\emph{pointwise} smooth and strongly convex) and the distribution map is Lipschitz with a small Lipschitz constant, then repeatedly retraining overcomes performativity and strategic behavior. While promising, the regime in which these results hold is limited. Prior work provides examples showing that retraining and gradient descent can diverge if losses are only convex but not strongly convex, or non-smooth, or if $\rho = \eps \beta / \gamma > 1$.\footnote{Note that the squared loss $(y -\theta^\top x)^2$ is not pointwise strongly convex.} More importantly, they illustrate how things can quickly destabilize if the distribution map $\cD(\cdot)$ is not continuous. 

Discontinuous $\cD(\cdot)$ commonly arise in practice whenever decision-makers choose actions by thresholding a prediction. To give a concrete example, public schools often assign interventions to students whose predicted probability of graduation, $f_\theta(x)$, falls below a cutoff. \cite{dews} empirically show how the conditional likelihood of graduation for public school students in Wisconsin has a discontinuous jump if $f_\theta(x)$ is below this cutoff: $$
\Pr\Big[y = 1 \mid f_\theta(x) > \tau\Big] \ll \Pr\Big[y = 1 \mid f_\theta(x) \le \tau\Big].$$ 

These discrete jumps are not just common in practice; they also pose a technical barrier for convergence analyses. To illustrate this, we recall the following example from \cite{perdomo2020performative}.

\begin{example}[\cite{perdomo2020performative}]
Fix $\ell(z;\theta)$ to be the squared loss $\ell(z;\theta) = (z-\theta)^2$ and assume that $z \sim \cD(\theta)$ is a point mass distribution where $z = 0$ if $\theta > 1/2$ and $z=1$ if $\theta \leq 1/2$ for $\theta  \in [0,1]$. The performative optimal solution for this problem is $\thetaPO = 1/2$. However, there is no performatively stable point since the following function has no fixed points
\begin{align*}
    g(\theta) = \argmin_{\theta' \in [0,1]} \E_{z \sim \cD(\theta)} (z - \theta')^2.
\end{align*}
Furthermore, retraining as per \Cref{eq:retraining} will produce the infinite alternating sequence $0,1,0,1,$...
\end{example}
From this example, we see that performatively stable models are not guaranteed to exist in general. Moreover, recent work by \citet{anagnostides2026computational}, illustrates how even if we do assume that $\cD(\cdot)$ is $\eps$-sensitive and that stable models do exist, it is PPAD-hard to compute a deterministic performatively stable point even if $\ell$ is $\gamma$-strongly convex, $\beta$-smooth, and $\cD(\cdot)$ is $\eps$-sensitive with $\rho = \beta \eps / \gamma > 1$. In short, it is provably impossible to find deterministic stable points in full generality.

In this work, we sidestep these negative results by allowing the learner to \emph{randomize} across models. This enables us to guarantee stability without requiring continuity assumptions on $\cD(\cdot)$. We summarize these comparisons to prior work in \Cref{table:comparison}.

Mixture models are not just necessary to guarantee stability; they are also practically well-motivated. To see this, it helps to go back to the Wisconsin dropout example. Assume that the department uses a mixture of models $\mu$. When deployed, students respond to the specific prediction $f_\theta(x)$ made on them, not the mixture $\mu$  ($\theta\sim \mu$ is the random model deployed at the time of their prediction).
The mixture is stable if, on average across all students, no model would have done any better, and the department has no incentive to retrain. That is, 
\begin{align*}
    \E_{\theta \sim \muPS} \E_{z \sim \cD(\theta)} \ell(z;\theta) \leq \min_{\theta'} \E_{\theta \sim \muPS} \E_{z \sim \cD(\theta)} \ell(z;\theta').
\end{align*}
This is the idea illustrated visually in \Cref{fig:stability}. We close this section by showing that, while not true in general, for certain problems performatively stable mixtures can also be performatively optimal. Furthermore, these mixtures are the natural limit of no-regret algorithms.

\addtocounter{example}{-1}
\begin{example}[Continued]
Let $\theta_1, \dots, \theta_T$ be the sequence of iterates produced by repeatedly retraining, that is running the follow-the-leader algorithm, 
\begin{align*}
	\theta_{t} = \argmin_{\theta} \sum_{\tau=1}^{t-1} \ell(z_\tau, \theta) \text{ where } z_\tau \sim \cD(\theta_\tau).
\end{align*}
For this problem, we can check that $\theta_{t} = \sum_{\tau=1}^{t-1} z_\tau = 1/2 - \frac{1}{2(t-1)} \cdot 1\{t \text{ is odd}\}$.

Moreover, the uniform mixture $\muPS$ over $\theta_1, \dots, \theta_T$ is $\widetilde{\cO}(1/T)$ performatively stable. And, for any $\alpha > 0$, $$\Pr_{\theta \sim \mu}\bigr[\theta \in (1/2-\alpha,1/2]\bigr] \rightarrow 1$$ as $T \rightarrow \infty$. Hence, the limiting distribution is also performatively optimal since $\thetaPO=1/2$. 
\end{example}

\section{Performative Stability of Online Algorithms}

The convergence to stability in the previous example is not an accident, but rather a reflection of a more general result that we will now present. Before proving our main theorem, we recall the definition of a no-regret learning algorithm. 
\begin{definition}
Consider the online learning protocol, where at every time step $t$, the Learner selects a vector $\theta_t \in \Theta$, and then, having seen $\theta_t$, Nature selects a loss $\ell_t:\Theta \rightarrow \R$. 

An algorithm guarantees no regret if at every round $t$ it deterministically selects $\theta_t \in \Theta$ such that no matter the choice of function $\ell_t$,
\begin{align*}
    \sum_{t=1}^T \ell_t(\theta_t) - \min_{\theta \in \Theta} \sum_{t=1}^T \ell_t(\theta) \leq \regret_T \leq o(T)
\end{align*}
\end{definition}

\subsection{Main Result}

As mentioned in Section~\ref{sec:related}, online algorithms are popular not just because they have guarantees in adversarial settings, they also serve as a powerful template for algorithm design in more structured domains. In that vein, our main result shows that if the $\ell_t$ are equal to the stochastic losses $\ell(z_t, \cdot)$ then the uniform mixture over the iterates of any no-regret algorithm is performatively stable.  
\begin{theorem}
\label{theorem:main_result}
Let $\theta_1, \dots, \theta_T$ be the sequence of iterates produced over $T$ rounds by a deterministic online learning algorithm that guarantees sublinear regret on the sequence of losses $\ell_t(\theta) = \ell(z_t, \theta)$ where $z_t \sim \cD(\theta_t)$ and $\ell$ is bounded. 

Let $S=\{z_t\}_{t=1}^T$ be the dataset of $T$ draws from $\cD(\cdot)$ and $\mu$ be the uniform distribution over $\theta_1, \dots, \theta_T$. Then, 
\begin{align*}
    \underset{S,\theta \sim \mu, z \sim \cD(\theta)}{\E} \ell(z;\theta) \leq \min_{\theta' \in \Theta} \underset{S,\theta \sim \mu, z \sim \cD(\theta)}{\E}\ell(z; \theta') + \frac{\regret_T}{T}. 
\end{align*}
Hence, the mixture $\mu$ is $\regret_T/ T$ performatively stable.

Furthermore, if $\ell$ is $L$-Lipschitz such that for any $z$, $|\ell(z;\theta)-\ell(z;\theta')| \leq L \cdot\|\theta-\theta'\|_2$,  and uniformly bounded by $B$ such that $\sup_{z,\theta} |\ell(z ; \theta)| \leq B$, then with probability $1-\delta$ over $S$, 
\begin{align*}
 \underset{\theta \sim \mu, z \sim \cD(\theta)}{\E} \ell(z;\theta) \leq \min_{\theta' \in \Theta} \underset{\theta \sim \mu, z \sim \cD(\theta)}{\E}  \ell(z; \theta') + (1 + L)\frac{\regret_T}{T} + 2B \sqrt{\frac{2\bigl(\log |\cN_\eta| + \log(2/\delta)\bigr)}{T}}
\end{align*}
where $\cN_\eta$ is a $\eta$-Net over the set $\Theta$ with $\eta = \regret_T/T$.
\end{theorem}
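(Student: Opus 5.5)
The argument is a martingale version of online-to-batch conversion. Let $\cF_{t-1}$ be the $\sigma$-algebra generated by $z_1,\dots,z_{t-1}$. Since the algorithm is deterministic, $\theta_t$ is $\cF_{t-1}$-measurable. By construction, $z_t \mid \cF_{t-1} \sim \cD(\theta_t)$, so for any fixed $\theta'$ we have
\[
\E[\ell(z_t;\theta')\mid \cF_{t-1}] = \E_{z\sim\cD(\theta_t)}\ell(z;\theta') =: R_t(\theta'),
\]
and likewise $\E[\ell(z_t;\theta_t)\mid\cF_{t-1}] = R_t(\theta_t)$. Note that $\frac1T\sum_t R_t(\theta') = \E_{\theta\sim\mu,z\sim\cD(\theta)}\ell(z;\theta')$ is exactly the mixture risk appearing in the statement, where the inner expectation is conditional on $S$.

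For the expectation bound, I start from the pathwise regret guarantee. For every fixed $\theta'$,
\[
\sum_t \ell(z_t;\theta_t) \le \sum_t \ell(z_t;\theta') + \regret_T.
\]
Taking expectations over $S$ and using the tower rule on each term turns the left side into $\E_S\sum_t R_t(\theta_t)$ and the right side into $\E_S \sum_t R_t(\theta') + \regret_T$. Dividing by $T$ gives the inequality for each $\theta'$. Taking the minimum over $\theta'$ on the right then yields the first claim, since the bound holds for every fixed $\theta'$. Boundedness of $\ell$ is what justifies exchanging these expectations.

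For the high-probability bound, define the martingale differences $X_t(\theta') = \ell(z_t;\theta') - R_t(\theta')$ for each fixed $\theta'$, and $Y_t = \ell(z_t;\theta_t) - R_t(\theta_t)$; these are valid because $\theta_t$ is predictable. Each is bounded by $2B$ in absolute value. I apply Azuma--Hoeffding to $Y$ and to $X(\theta')$ for every $\theta'$ in the net $\cN_\eta$, and take a union bound. This ensures that with probability $1-\delta$ all of the averages $\frac1T\sum_t Y_t$ and $\frac1T\sum_t X_t(\theta')$ are at most $2B\sqrt{2\log(2|\cN_\eta|/\delta)/T}$ in absolute value; allocating $\delta/2$ to the two sides produces the $\log|\cN_\eta|+\log(2/\delta)$ form. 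On this event, for $\theta'\in\cN_\eta$,
\[
\frac1T\sum_t R_t(\theta_t) \le \frac1T\sum_t \ell(z_t;\theta_t) + \text{dev} \le \frac1T\sum_t\ell(z_t;\theta') + \frac{\regret_T}{T} + \text{dev} \le \frac1T\sum_t R_t(\theta') + \frac{\regret_T}{T} + 2\,\text{dev}.
\]
The constant bookkeeping requires some care. To match the stated single $2B\sqrt{\cdot}$ term, I would apply a one-sided Azuma bound directly to the combined differences $Y_t - X_t(\theta')$, which are bounded by $4B$. Alternatively, this constant can be absorbed.

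The main obstacle is extending the bound from the net to arbitrary $\theta'\in\Theta$, because the comparator in $\min_{\theta'}$ may be chosen after seeing $S$, so we cannot just fix it in advance. I handle this with the Lipschitz property. For any $\theta'$, pick $\tilde\theta\in\cN_\eta$ with $\|\theta'-\tilde\theta\|\le\eta$. Then $|R_t(\theta')-R_t(\tilde\theta)|\le L\eta$ pointwise in $t$, and hence also on average. With $\eta=\regret_T/T$, this costs $L\regret_T/T$, which gives the $(1+L)\regret_T/T$ term. Note that this union-over-net step is the reason we need $\Theta$ to have a finite cover. The random $\theta_t$ cause no trouble here, because the single martingale $Y$ handles them without any union bound.
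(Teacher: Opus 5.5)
Your proposal is correct and follows essentially the same route as the paper. The tower-rule step on the pathwise regret inequality is the paper's martingale-difference argument with $V_t$. The high-probability part (one-sided Azuma on the combined differences $Y_t - X_t(\theta')$, union bound over $\cN_\eta\subset\Theta$, then Lipschitz extension at cost $L\eta$ with $\eta=\regret_T/T$) matches the paper step for step.

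On the constant, no absorbing is needed. Conditional on $\cF_{t-1}$, each combined difference lies in an interval of length $4B$. The range form of Azuma--Hoeffding then gives exactly $2B\sqrt{2\log(|\cN_\eta|/\delta)/T}$, which is already below the stated term.
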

\begin{proof}
Let $\theta_\star$ be any vector in $\Theta$. The goal is to upper bound,
\begin{align*}
    \E_{\theta \sim \mu, z \sim \cD(\theta)}\left[\ell(z; \theta) - \ell(z; \theta_\star) \right].
\end{align*}
We start out by expanding the first expectation over $\mu$. Since $\mu$ is chosen to be the uniform distribution over $\theta_1, \dots, \theta_T$, this expression then becomes:
\begin{align*}
    \frac{1}{T}\sum_{t=1}^T\E_{z \sim \cD(\theta_t)}\Big[\ell(z; \theta_t) - \ell(z; \theta_\star) \Big].
\end{align*}
Next, we add and subtract $\ell(z_t;\theta_t) - \ell(z_t; \theta_\star)$ where $z_t \sim \cD(\theta_t)$ for every $t$. Rearranging, the expression becomes
\begin{align*}
   \underbrace{\frac{1}{T}\sum_{t=1}^T\E_{z \sim \cD(\theta_t)}\Big[\ell(z; \theta_t) - \ell(z; \theta_\star) - (\ell(z_t;\theta_t)- \ell(z_t; \theta_\star))\Big]}_{\text{small by a martingale argument}} +  \underbrace{\frac{1}{T}\sum_{t=1}^T\Big[\ell(z_t; \theta_t) - \ell(z_t; \theta_\star)\Big]}_{\text{small by regret}}.
\end{align*}
In particular, if we define $V_t= \E_{z \sim \cD(\theta_t)}\Big[\ell(z; \theta_t) - \ell(z; \theta_\star)\Big] - [\ell(z_t;\theta_t)- \ell(z_t; \theta_\star)]$. Then, 
\begin{align*}
 \frac{1}{T}\sum_{t=1}^T \E_{z \sim \cD(\theta_t)}\left[\ell(z; \theta_t) - \ell(z; \theta_\star) - (\ell(z_t;\theta_t)- \ell(z_t; \theta_\star))\right] = \frac{1}{T}\sum_{t=1}^T V_t.
\end{align*} 
Here, $V_1, \dots, V_T$ is a martingale difference sequence with
respect to the filtration $\{\mathcal{F}_t\}_{t\geq 0}$ defined by $\cF_0 \defeq \sigma(\theta_1)$
\[
    \cF_t \defeq \sigma(\theta_1, z_1, \theta_2, z_2, \ldots,
    \theta_t, z_t, \theta_{t+1}) \quad \text{for } t \geq 1,
\]
where $\sigma(\cdot)$ denotes the $\sigma$-algebra generated by the random variables. By
construction, $\theta_t$ is measurable with respect to $\mathcal{F}_{t-1}$ (we assumed the online algorithm is deterministic), and
conditional on $\mathcal{F}_{t-1}$ the sample $z_t$ is an
independent draw from $\cD(\theta_t)$. Hence, $V_t$ is
$\mathcal{F}_t$-measurable and
\[
    \E\Big[V_t \mid \mathcal{F}_{t-1}\Big]
    = \E_{z\sim \cD(\theta_t)}\Big[\ell(z;\theta_t)-\ell(z;\theta_\star)\Big]
    - \E\Big[\ell(z_t;\theta_t)-\ell(z_t;\theta_\star) \,\bigm|\, \mathcal{F}_{t-1}\Big]
    = 0.
\]
Since losses are bounded, $\E[|V_t|]<\infty$, and therefore
$\E_S[\sum_{t=1}^T V_t] = 0$. Since $\theta_\star$ was arbitrary, this proves the first statement. 

To show the second statement, we can just union bound over an $\eta$-net. In particular, let $\cN_\eta$ be an $\eta$-net of $\Theta$. That is, define $\cN_\eta\subset \Theta$ be the set consisting of the minimum number of points in $\Theta$ such that for every $\theta' \in \Theta$ there exists $\theta \in \cN_\eta$ such that $\|\theta-\theta'\|_2\leq \eta$.

If we redo the same argument as before with respect to each $\theta\in \Theta$ and take a union bound over $ \cN_\eta$, we obtain that with probability at least $1-\delta$,
\begin{align}
\label{eq:net-uniform-bound}
\sup_{\theta \in \cN_\eta}
\frac{1}{T}\sum_{t=1}^T V_t(\theta)
\leq
2B \sqrt{\frac{2\bigl(\log |\cN_\eta| + \log(1/\delta)\bigr)}{T}}.
\end{align}
To complete the argument, we need to relate the loss of an arbitrary $\theta$ to that of one in $\cN_\eta$. By Lipschitzness of the loss, for any $\theta_\star$, there exists a $\bar\theta \in \cN_\eta$ such that:
\begin{align*}
    \frac{1}{T}\sum_{t=1}^T\E_{z \sim \cD(\theta_t)}\left[\ell(z; \theta_t) - \ell(z; \theta_\star) \right] &\leq \frac{1}{T}\sum_{t=1}^T\E_{z \sim \cD(\theta_t)}\left[\ell(z; \theta_t) - \ell(z; \bar\theta) \right] + \frac{1}{T}\sum_{t=1}^T\E_{z \sim \cD(\theta_t)}\left[\ell(z; \bar\theta) - \ell(z; \theta_\star) \right]  \\ 
    & \leq \frac{\regret_T}{T} + 2B \sqrt{\frac{2\bigl(\log |\cN_\eta| + \log(1/\delta)\bigr)}{T}} + \eta L.
\end{align*}
Setting $\eta$ to $\regret_T / T$ finishes the proof.
\end{proof}

On a technical level, the proof is an online-to-batch conversion that relies on a martingale argument. The main difference relative to standard online-to-batch analyses is that at every time step, we draw samples $z_t$ from $\cD(\theta_t)$ instead of a fixed distribution as in supervised learning. This makes critical use of the fact that online algorithms can guarantee low regret against adaptive adversaries that choose $\ell_t$ as a function of $\theta_t$. This allows us to set $\ell_t(\theta) = \ell(z_t, \theta)$ where $z_t \sim \cD(\theta_t)$ is drawn from the distribution induced by the iterate at time $t$. We also made use of the fact that online algorithms deterministically guarantee low regret against any comparator over the realized sequence of losses.

The statement is non-asymptotic (it holds for all $T\geq1$). It also establishes stability both in expectation and with high probability over the samples $S$ drawn from $\cD(\cdot)$. To the best of our knowledge, this is the first high-probability result for finding stable points. 

\paragraph{Comparison to Prior Work.} Prior work in the performative prediction literature has also explored connections to online learning. \citet{drusvyatskiy2023stochastic} show that under the assumptions of \Cref{theorem:prior}, namely that losses are (pointwise) $\beta$-smooth, $\gamma$-strongly-convex, and $\cD(\cdot)$ is $\eps$-sensitive with $\eps < \gamma/\beta$, then a single model equal to the average of the iterates $\bar{\theta} = \frac{1}{T}\sum_{t=1}^T \theta_t$ produced by an online algorithm is performatively stable. We on the other hand show that the mixture over iterates is performatively stable, but do not require smoothness, strong convexity, or that $\cD(\cdot)$ is continuous (much less Lipschitz). 

Perhaps the most closely related work to ours is \citet{perdomo2025revisiting}. He shows that in the \emph{outcome} performative setting where features $x$ are sampled from a fixed distribution $\cD_x$, and a binary outcome y is sampled from a (possibly discontinuous) performative distribution $\cD_y(x,f_\theta(x))$ that depends on the learner's forecast $f_\theta(x)$, any algorithm that guarantees online multicalibration with respect to a particular set of functions can be converted to a (randomized) performatively stable model if $\ell$ is equal to the squared loss. By anchoring performativity on the model $\theta$ instead of the specific prediction $f_\theta(x)$, we achieve stability for general distribution maps $\cD(\cdot)$ where $\theta$ influences the \emph{joint} distribution over $z=(x,y)$ instead of just the distribution over $y$. Furthermore, our result holds for any loss where one can guarantee low regret and general outcomes $y$.

\subsection{Implications}

\Cref{theorem:main_result} is a general reduction showing we can extract a performatively stable mixture from the iterates produced by any no-regret learning algorithm. Here, we instantiate this reduction for specific choices of loss functions and algorithms, illustrating how \Cref{theorem:main_result} can be used to show that popular procedures are indeed stable in previously unknown settings. While prior work showed convergence to stability for different algorithms using tailored, bespoke arguments, our work provides a simple, unifying analysis that leverages known regret bounds to show convergence to stability. Most importantly, our results hold for arbitrary distribution maps $\cD(\cdot)$, significantly expanding the domains where we can guarantee stability.

Throughout this section, we will assume that losses have gradients that are uniformly bounded by $G$ and that the compact, convex domain $\Theta$ has diameter bounded by $D$:
\begin{align*}
	\sup_{z \in \cZ,\theta \in \Theta} \| \nabla_\theta \ell(z; \theta) \|_2 \leq G, \quad \sup_{\theta,\theta' \in \Theta} \|\theta - \theta'\|_2 \leq D.
\end{align*}

\subsubsection{Finite-Sample Guarantees for Retraining} Retraining, as defined in \Cref{eq:retraining}, is the most commonly studied approach to finding performatively stable solution. All analyses of this procedure $(a)$ worked in the standard model with $\rho < 1$, and $(b)$ considered only an idealized procedure where the learner finds the risk-minimizing solution at the population level at every time step. The finite sample results of \citet{perdomo2020performative} are exponential in the problem dimension. 

However, retraining is exactly equal to the classical follow-the-leader algorithm. Using standard regret bounds, along with \Cref{theorem:main_result}, we can establish the first proof that smoothed retraining (where the learner retrains and then randomizes over the iterates) efficiently converges to stability in finite samples without making any assumptions on $\cD(\cdot)$. We recall the following lemma:

\begin{lemma}[\cite{shalev2008mind,hazan2007logarithmic,abernethy2008optimal}]
Assume that $\ell_1, \dots, \ell_T$ is a sequence of functions that are all $\gamma$-strongly convex. If $\theta_t$ are chosen according to the follow-the-leader algorithm,
\begin{align}
\label{eq:ftl}
    \theta_t \in \argmin_{\theta} \sum_{j=1}^{t-1} \ell_j(\theta).
\end{align}
Then, the iterates satisfy:
    $\sum_{t=1}^T \ell_t(\theta_t) - \min_\theta \sum_{t=1}^T \ell_t(\theta) \leq \frac{G^2}{2\gamma}(\log(T) + 1).$ 
\end{lemma}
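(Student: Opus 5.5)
We prove the regret bound for follow-the-leader (the lemma just stated) with the standard ``be-the-leader'' argument followed by a stability estimate for consecutive leaders. Write $F_t(\theta) = \sum_{j=1}^{t} \ell_j(\theta)$, so that $\theta_{t+1} \in \argmin_\theta F_t(\theta)$, and note that $F_t$ is $t\gamma$-strongly convex.

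\textbf{Step 1 (be-the-leader).} By induction on $T$ we show
\begin{align*}
\sum_{t=1}^T \ell_t(\theta_{t+1}) \le \min_\theta \sum_{t=1}^T \ell_t(\theta).
\end{align*}
The base case is the definition of $\theta_2$. For the inductive step, add $\ell_{T}(\theta_{T+1})$ to the hypothesis evaluated at the comparator $\theta_{T+1}$. This yields
\begin{align*}
\sum_{t=1}^{T-1}\ell_t(\theta_{t+1}) + \ell_T(\theta_{T+1}) \le F_{T-1}(\theta_{T+1}) + \ell_T(\theta_{T+1}) = F_T(\theta_{T+1}) = \min_\theta F_T(\theta).
\end{align*}
Consequently
\begin{align*}
\Reg_T \le \sum_{t=1}^T \bigl(\ell_t(\theta_t) - \ell_t(\theta_{t+1})\bigr) \le G \sum_{t=1}^T \|\theta_t - \theta_{t+1}\|_2,
\end{align*}
where the last inequality uses the gradient bound $G$ through the mean value theorem.

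\textbf{Step 2 (stability of consecutive leaders).} We want $\|\theta_t - \theta_{t+1}\|_2 \le G/(\gamma t)$. Since $F_t$ is $t\gamma$-strongly convex and minimized at $\theta_{t+1}$, first-order optimality gives
\begin{align*}
F_t(\theta_t) - F_t(\theta_{t+1}) \ge \tfrac{t\gamma}{2}\|\theta_t - \theta_{t+1}\|^2 .
\end{align*}
Symmetrically, $\theta_t$ minimizes $F_{t-1}$, which gives
\begin{align*}
F_{t-1}(\theta_{t+1}) - F_{t-1}(\theta_t) \ge \tfrac{(t-1)\gamma}{2}\|\theta_t - \theta_{t+1}\|^2 .
\end{align*}
Adding the two inequalities and using $F_t - F_{t-1} = \ell_t$ leaves
\begin{align*}
\ell_t(\theta_t) - \ell_t(\theta_{t+1}) \ge \tfrac{(2t-1)\gamma}{2}\|\theta_t-\theta_{t+1}\|^2 .
\end{align*}
The left side is at most $G\|\theta_t-\theta_{t+1}\|$, so $\|\theta_t-\theta_{t+1}\| \le 2G/((2t-1)\gamma)$. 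Substituting into Step 1 gives $\Reg_T \le \frac{G^2}{\gamma}\sum_t \frac{2}{2t-1}$.

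\textbf{Step 3 (constants).} This gives the right order, $\cO\bigl(\frac{G^2}{\gamma}\log T\bigr)$. Getting exactly $\frac{G^2}{2\gamma}(\log T + 1)$ requires a finer estimate: bound $\ell_t(\theta_t) - \ell_t(\theta_{t+1})$ directly through the linear term $\langle \nabla \ell_t(\theta_t), \theta_t - \theta_{t+1}\rangle$ together with the strong-convexity slack of $\ell_t$. The gain is that $\ell_t(\theta_t)-\ell_t(\theta_{t+1}) \le \langle g_t, \theta_t-\theta_{t+1}\rangle - \frac{\gamma}{2}\|\theta_t-\theta_{t+1}\|^2$. Alternatively, one can compare against the FTRL/online-Newton-style potential as in \citet{hazan2007logarithmic}, which yields $\sum_t G^2/(2\gamma t)$ and then $\sum_{t\le T} 1/t \le \log T + 1$.

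\textbf{Main obstacle.} The hard step is matching this sharp constant. If the direct computation only gives $G^2/(\gamma t)$ per round, we would instead cite the cited references for the constant, since only the $\cO(\log T/T)$ rate matters when plugging into \Cref{theorem:main_result}. We also have to handle the minimizer over a compact domain; the first-order optimality conditions used in Step 2 remain valid there via variational inequalities.
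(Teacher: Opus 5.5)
The paper does not prove this lemma; it quotes it from the cited references, so I am comparing your argument with the standard proof. There is a genuine gap: the stated constant is not reached.

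Your Steps 1 and 2 are correct, but they only give $\Reg_T\le\frac{G^2}{\gamma}\sum_{t\le T}\frac{2}{2t-1}\approx\frac{G^2}{\gamma}(\log T+2)$. That is about twice the claimed bound for large $T$, and four times it at $T=1$. Falling back on the references for the constant leaves the stated inequality unproved, and with it the explicit constant in the retraining corollary.

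More importantly, the repair you sketch in Step 3 cannot close the gap, because the factor of two is lost in Step 1. The be-the-leader surrogate $\sum_t(\ell_t(\theta_t)-\ell_t(\theta_{t+1}))$ can itself be nearly twice the true regret. Take $\ell_t(\theta)=\frac{\gamma}{2}(\theta-z_t)^2$ in one dimension. FTL gives $\theta_{t+1}-\theta_t=(z_t-\theta_t)/t$, hence $\ell_t(\theta_t)-\ell_t(\theta_{t+1})=\frac{g_t^2}{2\gamma}\cdot\frac{2t-1}{t^2}$ with $g_t=\nabla\ell_t(\theta_t)$. This matches what your refined estimate produces. Combining $\ell_t(\theta_t)-\ell_t(\theta_{t+1})\le G\delta_t-\frac{\gamma}{2}\delta_t^2$ with the lower bound from Step 2 forces $\delta_t\le G/(\gamma t)$, and then a per-round bound of $\frac{G^2}{2\gamma}\cdot\frac{2t-1}{t^2}$. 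Summed over $t$, this is about $\frac{G^2}{\gamma}\log T$ whenever the gradients at the iterates have size close to $G$.

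The missing idea is to keep the term that be-the-leader throws away. Since $\min_\theta F_t=F_t(\theta_{t+1})$ and $F_0\equiv 0$, telescoping gives the exact identity
\[
\Reg_T=\sum_{t=1}^T\bigl(F_t(\theta_t)-F_t(\theta_{t+1})\bigr)=\sum_{t=1}^T\Bigl(\ell_t(\theta_t)-\ell_t(\theta_{t+1})-\bigl(F_{t-1}(\theta_{t+1})-F_{t-1}(\theta_t)\bigr)\Bigr).
\]
Your own two inequalities then finish the job. Let $\delta_t=\|\theta_t-\theta_{t+1}\|_2$.
\begin{itemize}
\item Strong convexity of $\ell_t$ gives $\ell_t(\theta_t)-\ell_t(\theta_{t+1})\le\langle g_t,\theta_t-\theta_{t+1}\rangle-\frac{\gamma}{2}\delta_t^2$.
\item Optimality of $\theta_t$ for the $(t-1)\gamma$-strongly convex $F_{t-1}$ gives $F_{t-1}(\theta_{t+1})-F_{t-1}(\theta_t)\ge\frac{(t-1)\gamma}{2}\delta_t^2$.
\end{itemize}
Hence each summand is at most $G\delta_t-\frac{t\gamma}{2}\delta_t^2\le\frac{G^2}{2\gamma t}$. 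Using $\sum_{t\le T}1/t\le 1+\log T$ yields exactly $\frac{G^2}{2\gamma}(\log T+1)$. This works unchanged over a compact convex $\Theta$ via the variational inequalities you mention. The bound is tight in the quadratic example, where the per-round regret is exactly $F_t(\theta_t)-F_t(\theta_{t+1})=\frac{g_t^2}{2\gamma t}$.
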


Using this result, we can establish the following convergence for repeated retraining:
\begin{corollary}
Let $\ell$ be a $\gamma$-strongly convex loss with gradients bounded by $G$ over the domain $\Theta$. Assume that the learner runs the following repeated retraining procedure, 
\begin{align*}
    \theta_t \in \argmin_{\theta} \sum_{j=1}^{t-1} \ell(z_j; \theta) \text{ where } z_j \sim \cD(\theta_j).
\end{align*}
Then, in expectation over the samples $S = \{z_t\}_{t=1}^T$ drawn from $\cD(\cdot)$, the uniform mixture over $\theta_1, \dots, \theta_T$ is performatively stable:
\begin{align*}
      \underset{S,\theta \sim \mu, z \sim \cD(\theta)}{\E} \ell(z;\theta) \leq \min_{\theta' \in \Theta}     \underset{S,\theta \sim \mu, z \sim \cD(\theta)}{\E} \ell(z; \theta') + \frac{G^2}{2\gamma} \cdot\frac{(\log(T)+1)}{T}. 
\end{align*}
\end{corollary}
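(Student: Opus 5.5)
The plan is to obtain the corollary by plugging the follow-the-leader regret bound of the preceding lemma into the first (in-expectation) statement of \Cref{theorem:main_result}. Concretely, I will take the online protocol with losses $\ell_t(\theta) = \ell(z_t;\theta)$, where $z_t \sim \cD(\theta_t)$. With these losses, the repeated retraining procedure in the statement is exactly the follow-the-leader rule \eqref{eq:ftl}.

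\textbf{Step 1: check the hypotheses of \Cref{theorem:main_result}.}
\begin{itemize}
\item \emph{Determinism.} The iterate $\theta_t$ must be a deterministic function of $z_1,\dots,z_{t-1}$, so that $\theta_t$ is $\cF_{t-1}$-measurable. I will fix a deterministic tie-breaking rule in the $\argmin$; in fact, whenever $t \geq 2$ the objective is $\gamma$-strongly convex on the convex set $\Theta$, so the minimizer is unique anyway. For $t=1$ the sum is empty, and I fix $\theta_1$ arbitrarily.
\item \emph{Boundedness.} The loss must be bounded. This follows from gradients bounded by $G$ and $\Theta$ being compact with diameter $D$: for a fixed $z$, $\ell(z;\cdot)$ varies by at most $GD$ over $\Theta$. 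If one wants a uniform bound across $z$, it is enough that $\sup_z|\ell(z;\theta_0)|<\infty$ at some reference point $\theta_0$. I would note this as implicit in the standing assumptions. Integrability of $V_t$ is all the martingale step really uses.
\end{itemize}

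\textbf{Step 2: invoke the lemma pathwise.} For every realization of $S$, each $\ell_t = \ell(z_t;\cdot)$ is $\gamma$-strongly convex with gradients bounded by $G$. The lemma is a deterministic statement about any such sequence, so it gives, almost surely,
\[
\sum_{t=1}^T \ell(z_t;\theta_t) - \min_{\theta}\sum_{t=1}^T \ell(z_t;\theta) \leq \frac{G^2}{2\gamma}(\log T + 1).
\]
This holds even though nature chooses $z_t$ adaptively, after seeing $\theta_t$, which is exactly the adaptivity that \Cref{theorem:main_result} requires. I can therefore set $\regret_T = \frac{G^2}{2\gamma}(\log T+1)$.

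\textbf{Step 3: conclude.} Substituting this into the first display of \Cref{theorem:main_result} gives the claimed inequality. In the proof of that theorem, for a fixed comparator $\theta_\star$, the martingale term has zero mean and the regret term is bounded pointwise. Taking the minimum over $\theta'$ on the right-hand side then follows because $\theta_\star$ was arbitrary.

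The only potential obstacle is bookkeeping rather than substance: confirming that the lemma's regret bound holds for every sample path, not just in expectation. It does, since it is a worst-case bound over loss sequences. It is also worth confirming that the restriction to $\Theta$ (a constrained follow-the-leader) is covered by the cited bound. The standard proof goes through the "be-the-leader" lemma plus strong convexity over a convex domain, so it applies.
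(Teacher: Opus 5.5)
Your proposal is correct and matches the paper's argument: the paper also plugs the follow-the-leader bound $\frac{G^2}{2\gamma}(\log T+1)$, which holds pathwise even though each $z_t$ is drawn after seeing $\theta_t$, into the in-expectation part of \Cref{theorem:main_result}, and your determinism and integrability checks (in fact $|V_t|\le 2GD$) cover that theorem's hypotheses. One minor aside: a be-the-leader argument gives the right $\log T$ order but a constant of about $G^2/\gamma$, whereas the stated $G^2/(2\gamma)$ comes from the sharper strong-FTRL (duality-gap) analysis, which also handles the constraint set via the normal cone, so simply citing the lemma, as the paper does, is the right move.
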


This result makes two adjustments relative to the standard definition of retraining as per \Cref{eq:ftl}. First, \Cref{eq:retraining}, defines $\theta_t$ to be the population risk minimizer on $\cD(\theta_{t-1})$. Here, at every timestep, the learner finds the risk-minimizing solution over the \textit{entire} history of samples $z_t$ where each $z_t$ is drawn from a \emph{different} distribution. Furthermore, at the end, the learner randomizes across models instead of deploying the final iterate $\theta_T$. As discussed in \Cref{sec:preliminaries}, this modification is necessary both to avoid the hardness (or non-existence) of deterministic performatively stable models and to guarantee stability while making no assumptions on $\cD(\cdot)$.
    
\subsubsection{Improved Guarantees for Gradient Descent} Apart from retraining, prior work also studied whether the stochastic gradient method also converges to stability. As outlined previously, \citet{mendler2020stochastic} and \citet{drusvyatskiy2023stochastic} proved that gradient descent converges to stability if the loss is again smooth, strongly convex, and $\cD(\cdot)$ is Lipschitz with $\rho<1$. Using our reduction, we can remove any assumptions on $\cD(\cdot)$ and establish the first performativity stability guarantees of any algorithm for losses that are non-smooth or just convex but not strongly convex. To do this, we recall the standard regret bounds for online gradient descent:
\begin{lemma}[\cite{zinkevich2003online,hazan2007logarithmic}]
Assume that $\ell_1,\dots,\ell_t$ is a sequence of convex functions. If $\theta_t$ are chosen using online gradient descent 
\begin{align*}
    \theta_{t+1} = \mathrm{Proj}_{\Theta} \left[ \theta_t - \eta_t \nabla \ell_t(\theta_t) \right], \text{ where } \eta_t = \frac{D}{G\sqrt{t}},
\end{align*}
then the iterates satisfy the following regret bound: 
\begin{align*}
     \sum_{t=1}^T \ell_t(\theta_t) - \min_\theta \sum_{t=1}^T \ell_t(\theta) \leq \frac{3}{2}  GD\sqrt{T}.
\end{align*}
If losses $\ell_t$ are additionally $\gamma$-strongly convex and we set $\eta_t=1/(\gamma t)$ then, 
\begin{align*}
     \sum_{t=1}^T \ell_t(\theta_t) - \min_\theta \sum_{t=1}^T \ell_t(\theta) \leq \frac{G^2}{2\gamma}(1 + \log T).
\end{align*}
\end{lemma}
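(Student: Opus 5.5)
The plan is the standard potential-function (telescoping) analysis of projected online gradient descent, first for convex losses with step size $\eta_t = D/(G\sqrt{t})$ and then for $\gamma$-strongly convex losses with $\eta_t = 1/(\gamma t)$.

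Fix any comparator $\theta^\star \in \Theta$ and write $g_t = \nabla \ell_t(\theta_t)$, so that $\|g_t\|_2 \le G$. The first step is to use nonexpansiveness of the Euclidean projection onto the convex set $\Theta$:
\begin{align*}
\|\theta_{t+1}-\theta^\star\|_2^2 \le \|\theta_t - \eta_t g_t - \theta^\star\|_2^2 = \|\theta_t-\theta^\star\|_2^2 - 2\eta_t \langle g_t, \theta_t-\theta^\star\rangle + \eta_t^2\|g_t\|_2^2 .
\end{align*}
Rearranging this gives
\begin{align*}
\langle g_t, \theta_t-\theta^\star\rangle \le \frac{\|\theta_t-\theta^\star\|_2^2-\|\theta_{t+1}-\theta^\star\|_2^2}{2\eta_t} + \frac{\eta_t G^2}{2}.
\end{align*}
Convexity gives $\ell_t(\theta_t)-\ell_t(\theta^\star) \le \langle g_t,\theta_t-\theta^\star\rangle$. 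Under $\gamma$-strong convexity this improves to $\ell_t(\theta_t)-\ell_t(\theta^\star) \le \langle g_t,\theta_t-\theta^\star\rangle - \frac{\gamma}{2}\|\theta_t-\theta^\star\|_2^2$. For non-differentiable losses, $g_t$ can be taken to be any subgradient.

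Next, sum over $t=1,\dots,T$ and regroup the distance terms as a telescoping-type sum:
\begin{align*}
\sum_{t=1}^T \|\theta_t-\theta^\star\|_2^2\Bigl(\frac{1}{2\eta_t}-\frac{1}{2\eta_{t-1}}\Bigr), \qquad \text{with } \frac{1}{\eta_0} \defeq 0 .
\end{align*}
In the convex case the step sizes are nonincreasing, so each coefficient is nonnegative. Each distance is at most $D^2$, so this sum is at most $\frac{D^2}{2\eta_T} = \frac{DG\sqrt T}{2}$. The remaining term is
\begin{align*}
\frac{G^2}{2}\sum_{t=1}^T \eta_t = \frac{GD}{2}\sum_{t=1}^T \frac{1}{\sqrt t} \le GD\sqrt T,
\end{align*}
using $\sum_{t\le T} t^{-1/2} \le 2\sqrt T$. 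Adding the two pieces gives $\frac{3}{2}GD\sqrt T$, and since $\theta^\star$ was arbitrary this bounds the regret.

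In the strongly convex case, the extra $-\frac{\gamma}{2}\|\theta_t-\theta^\star\|_2^2$ term changes the coefficient of each distance to
\begin{align*}
\frac{1}{2\eta_t}-\frac{1}{2\eta_{t-1}}-\frac{\gamma}{2} = \frac{\gamma t}{2}-\frac{\gamma(t-1)}{2}-\frac{\gamma}{2} = 0
\end{align*}
for every $t\ge 1$, including $t=1$ under the convention $1/\eta_0=0$. All distance terms therefore vanish, and no diameter bound is needed here. What remains is $\frac{G^2}{2}\sum_{t=1}^T \frac{1}{\gamma t} \le \frac{G^2}{2\gamma}(1+\log T)$.

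The main point needing care is the bookkeeping in the telescoping step. In the convex case, the nonnegativity of $\frac{1}{\eta_t}-\frac{1}{\eta_{t-1}}$ is what lets us bound every distance by $D^2$. In the strongly convex case, we need the exact cancellation of the coefficients, including the boundary term at $t=1$. Beyond that, the argument only requires the harmonic and $t^{-1/2}$ partial-sum estimates, both of which follow by comparison with integrals.
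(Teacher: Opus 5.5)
Your proposal is correct and is essentially the argument the paper relies on. The paper gives no proof of its own and simply cites Zinkevich and Hazan et al.; their proofs are exactly this projection-nonexpansiveness plus telescoping analysis, which yields $\frac{1}{2}GD\sqrt{T} + GD\sqrt{T}$ in the convex case and, with the $1/\eta_0 = 0$ convention making every distance coefficient vanish, $\frac{G^2}{2\gamma}(1+\log T)$ in the strongly convex case.
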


Tying this lemma together with \Cref{theorem:main_result}, we get the following corollary:

\begin{corollary}
Assume that the learner runs the following (greedy) gradient descent algorithm,
\begin{align*}
    \theta_{t+1} = \mathrm{Proj}_{\Theta} \big[\theta_t - \eta_t \nabla_\theta \ell_t(z_t\; \theta_t) \big], \text{ where } \eta_t = \frac{D}{G\sqrt{t}}, \; z_t \sim \cD(\theta_t),
\end{align*}
and that the loss $\ell$ has gradients bounded by $G$ over the domain $\Theta$. If the losses are convex, $\eta_t = D/(G\sqrt{t})$, then the uniform distribution over $\theta_1, \dots, \theta_T$ satisfies,
\begin{align*}
       \underset{S,\theta \sim \mu, z \sim \cD(\theta)}{\E} \ell(z;\theta) \leq \min_{\theta' \in \Theta}     \underset{S,\theta \sim \mu, z \sim \cD(\theta)}{\E} \ell(z; \theta') + \frac{3}{2} \cdot \frac{GD}{\sqrt{T}},
\end{align*}
where $S=\{z_t\}_{t=1}^T$. If the losses are $\gamma$-strongly convex, and $\eta_t= 1/(\gamma t)$, then 
\begin{align*}
       \underset{S,\theta \sim \mu, z \sim \cD(\theta)}{\E}\ell(z;\theta) \leq \min_{\theta' \in \Theta}     \underset{S,\theta \sim \mu, z \sim \cD(\theta)}{\E} \ell(z; \theta') + \frac{G^2}{2\gamma} \cdot \frac{1+ \log(T)}{T}.
\end{align*}
\end{corollary}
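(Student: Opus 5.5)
The plan is to obtain this corollary by applying \Cref{theorem:main_result} to the regret bounds for online gradient descent in the preceding lemma. First, I set $\ell_t(\theta) = \ell(z_t;\theta)$ with $z_t \sim \cD(\theta_t)$. The greedy gradient update in the corollary is then exactly online gradient descent on $\ell_1,\dots,\ell_T$, since each step uses $\nabla \ell_t(\theta_t) = \nabla_\theta \ell(z_t;\theta_t)$.

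Two hypotheses of \Cref{theorem:main_result} need checking.
\begin{itemize}
\item \emph{Determinism.} Each $\theta_{t+1}$ is a deterministic function of $\theta_1, z_1, \dots, z_t$. So $\theta_t$ is $\cF_{t-1}$-measurable, which is what the martingale argument requires.
\item \emph{Boundedness.} Each $\ell(z;\cdot)$ has gradients bounded by $G$, so it is $G$-Lipschitz on $\Theta$, whose diameter is at most $D$. Hence $|\ell(z;\theta) - \ell(z;\theta')| \le GD$. The theorem only uses boundedness to get $\E|V_t| < \infty$, and $|V_t| \le 2GD$ holds regardless of any additive offset in $\ell$. If an absolute bound on $\ell$ is insisted on, I would assume it or subtract $\ell(z;\theta_0)$ for a fixed $\theta_0$. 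Such a shift leaves both the regret and the stability gap unchanged, because both are differences of losses at the same $z$.
\end{itemize}

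Next I invoke the lemma pathwise. The regret bound for online gradient descent holds for every sequence of convex losses, including adaptively chosen ones. So for every realization of $S$:
\begin{itemize}
\item In the convex case with $\eta_t = D/(G\sqrt{t})$, we get $\regret_T \le \frac{3}{2} GD\sqrt{T}$.
\item In the $\gamma$-strongly convex case with $\eta_t = 1/(\gamma t)$, we get $\regret_T \le \frac{G^2}{2\gamma}(1+\log T)$.
\end{itemize}
These bounds are deterministic, so they hold for the quantity $\regret_T$ appearing in \Cref{theorem:main_result}. Dividing by $T$ and substituting into the in-expectation statement of \Cref{theorem:main_result} gives the two displayed inequalities.

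The only step needing care is the adaptivity issue: the losses $\ell_t$ depend on $\theta_t$ through $z_t$. This is harmless because online gradient descent's regret bound is worst-case over all loss sequences, and \Cref{theorem:main_result} is stated for exactly this adaptive setting.
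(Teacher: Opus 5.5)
Your proposal is correct and is exactly the paper's argument: the paper obtains this corollary by plugging the deterministic online gradient descent regret bounds from the preceding lemma into the in-expectation part of \Cref{theorem:main_result}. Your explicit checks of determinism and of the boundedness hypothesis are details the paper leaves implicit, and you handle them correctly; in particular, only loss differences, bounded by $GD$, enter the martingale argument.
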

Ignoring quantities that depend on the diameter of the space $\Theta$, the bounds we get for gradient descent in the strongly-convex case match those from \citet{mendler2020stochastic} and \citet{drusvyatskiy2023stochastic} up to a $\log(T)$ factor. However, we achieve this guarantee for general distribution maps $\cD(\cdot)$, rather than just for strongly Lipschitz $\cD(\cdot)$. This result holds for what \citet{mendler2020stochastic} refer to as \emph{greedy} gradient descent where the learner performs a model update and redeploys at every time step. 

It's an interesting direction for future work to understand if by leveraging other online algorithms (e.g. the primal dual analysis in \cite{shalev2008mind}) one could also prove that \emph{lazy} versions of gradient descent where the learner performs many updates before redeployment also lead to stability, while reducing the total number of deployed models. Another interesting question is whether our online-to-batch analysis can be sharpened to yield a $\widetilde{O}(1/T)$ rate with high-probability.

\subsubsection{Guaranteeing Performative Stability in New Regimes} As a final corollary, we highlight how the reduction to online learning can be used to guarantee stability in new domains (or in old domains at faster rates). In particular, to get a $\widetilde{\cO}(1/T)$ rate for stability, prior work required that the losses $\ell$ were \emph{pointwise} strongly convex. That is for every $z$, $\ell(z;\theta)$ is a strongly convex function in $\theta$. If we don't add a regularizer, this fails for the two most common losses in machine learning: log loss $ \log(1 + \exp(-y \cdot \theta^\top x))$ and squared loss $(y-\theta^\top x)^2$. 

These losses are however exp-concave and one can achieve $\log(T)$ regret for these problems using second-order methods. Using this connection, we can establish the first positive results for performative stability in these common settings (and with fast rates).
\begin{corollary}
Let $\ell_1, \dots, \ell_T$ be a sequence of losses that are $\alpha$-exp-concave for $\alpha > 0$. That is, for every $\theta \in \R^d$, $\nabla^2 \ell_t(\theta) - \alpha \nabla \ell_t(\theta) \nabla \ell_t(\theta)^\top$ is a positive semi-definite matrix. There exists an efficient deterministic online algorithm, the online Newton step algorithm, that guarantees that for $T\geq 4$: 
\begin{align*}
     \sum_{t=1}^T \ell_t(\theta_t) - \min_\theta \sum_{t=1}^T \ell_t(\theta) \leq 2\left(\frac{1}{\alpha} + GD\right) \cdot d \cdot \log(T).
\end{align*}
Consequently, if the learner produces iterates $\theta_t$ by running online Newton step on the losses $\ell(z_t,\theta)$ for $z_t \sim \cD(\theta_t)$. The uniform distribution over iterates $\theta_1,\dots \theta_T$ is performatively stable:
\begin{align*}
    \underset{S,\theta \sim \mu, z \sim \cD(\theta)}{\E} \ell(z;\theta) \leq \min_{\theta' \in \Theta}  \underset{S,\theta \sim \mu, z \sim \cD(\theta)}{\E} \ell(z; \theta') + 2\left(\frac{1}{\alpha} + GD\right) \cdot \frac{d \cdot \log T}{T}.
\end{align*}
where $S=\{z_t\}_{t=1}^T$ denotes the dataset drawn from $\cD(\cdot)$
\end{corollary}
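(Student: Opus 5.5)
The corollary has two parts. The first is the $O(d\log T)$ regret bound for online Newton step on $\alpha$-exp-concave losses; this is standard and I would import it from the literature, namely the analysis of \citet{hazan2007logarithmic}. The second is the performative stability of the uniform mixture, which I would get by plugging that bound into \Cref{theorem:main_result}.

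For the regret bound, I would recall the online Newton step update. Set $A_t = \epsilon I + \sum_{s\le t} \nabla_s \nabla_s^\top$, where $\nabla_s = \nabla \ell_s(\theta_s)$. Then take $\theta_{t+1} = \mathrm{Proj}^{A_t}_{\Theta}\big[\theta_t - \tfrac{1}{\gamma} A_t^{-1}\nabla_t\big]$, with $\gamma = \tfrac12\min\{1/(4GD),\alpha\}$ and $\epsilon = 1/(\gamma^2 D^2)$. The analysis has four steps.
\begin{itemize}
\item[(i)] Use exp-concavity, together with $\|\nabla\|\le G$ and diameter $D$, to obtain the quadratic lower bound $\ell_t(\theta) \ge \ell_t(\theta_t) + \nabla_t^\top(\theta-\theta_t) + \tfrac{\gamma}{2}\big(\nabla_t^\top(\theta-\theta_t)\big)^2$.
\item[(ii)] Apply the generalized Pythagorean inequality for the $A_t$-norm projection. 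Telescoping then bounds $\sum_t \nabla_t^\top(\theta_t-\theta_\star) - \tfrac{\gamma}{2}\sum_t(\nabla_t^\top(\theta_t-\theta_\star))^2$ by $\tfrac{1}{2\gamma}\sum_t \nabla_t^\top A_t^{-1}\nabla_t + \tfrac{\gamma\epsilon D^2}{2}$.
\item[(iii)] Control the sum of $\nabla_t^\top A_t^{-1}\nabla_t$ by the log-determinant bound $\log(\det A_T/\det A_0) \le d\log(G^2T/\epsilon + 1)$.
\item[(iv)] Collect constants to get $5(1/\alpha + GD)\,d\log T$; the tighter constant $2(1/\alpha+GD)$ stated here, valid for $T\ge 4$, comes from the more careful bookkeeping in the cited source.
\end{itemize}
The algorithm is deterministic given the observed losses, since its iterates are a deterministic function of past gradients. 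It also holds against adaptively chosen losses, which is exactly what \Cref{theorem:main_result} requires.

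For the stability claim, set $\ell_t(\theta) = \ell(z_t;\theta)$ with $z_t\sim\cD(\theta_t)$. The assumption is that each $\ell(z;\cdot)$ is $\alpha$-exp-concave, with gradients bounded by $G$ and $\Theta$ of diameter $D$; this holds, for instance, for squared loss and log loss on bounded domains. The regret bound then holds pathwise for every realization of $S$, with $\regret_T = 2(1/\alpha+GD)\,d\log T$. Boundedness of $\ell$ on $\Theta$ follows from the gradient bound and compactness. Invoking the in-expectation statement of \Cref{theorem:main_result} and dividing by $T$ gives the claimed inequality.

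The main obstacle is step (i) together with the constant bookkeeping in (iv). Step (i) needs the inequality $-\log(1-x)\ge x + x^2/4$ for $|x|\le 1/4$, applied to $\exp(-\alpha\ell_t)$, and this forces the choice of $\gamma$ relative to $GD$. I would cite \citet{hazan2007logarithmic} rather than rederive the exact constants. The reduction itself is immediate.
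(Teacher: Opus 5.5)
Your reduction is the same as the paper's. The paper cites the online Newton step regret bound and feeds it into the in-expectation part of \Cref{theorem:main_result}, as a pathwise bound for a deterministic algorithm that holds against adaptively chosen losses.

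One step in your sketch of the regret bound does not work as stated: the constant $2(1/\alpha+GD)$ cannot come from more careful bookkeeping with your parameters. Your steps (ii)--(iii) bound the regret by $\frac{1}{2\gamma}\bigl(d\log(TG^2/\epsilon+1)+1\bigr)$, so the leading coefficient $\frac{1}{2\gamma}$ is fixed by your choice of $\gamma$. With $\gamma=\frac12\min\{1/(4GD),\alpha\}$ and $\alpha\ge 1/(4GD)$, that coefficient is $4GD$. When $1/\alpha\ll GD$ and $T$ is large, this is roughly twice the claimed $2(1/\alpha+GD)\,d\log T$.

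The fix is in step (i). The inequality $-\log(1-x)\ge x+x^2/4$ holds on all of $[-1,1)$, not just for $|x|\le 1/4$. So you only need $2\gamma GD\le 1$ and $2\gamma\le\alpha$, which lets you take $\gamma=\frac12\min\{1/(GD),\alpha\}$. With $\epsilon=1/(\gamma^2D^2)$ you get $TG^2/\epsilon+1\le T/4+1\le T$. The regret is then at most $\max\{GD,1/\alpha\}(d\log T+1)\le 2(1/\alpha+GD)\,d\log T$, using $d\log T\ge 1$ for $T\ge 4$. This is exactly where the stated constant and the condition $T\ge 4$ come from.

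A smaller point: the gradient bound does not make $\ell$ uniformly bounded over $z$. It does give $|\ell(z;\theta)-\ell(z;\theta')|\le GD$, hence $|V_t|\le 2GD$. That bound on $|V_t|$ is all the martingale step of \Cref{theorem:main_result} actually uses.
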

We point the reader to the excellent monograph \cite{hazan2016introduction} for details on the online Newton step algorithm. The corollary follows again by tying together known regret bounds \citep{hazan2007logarithmic} with \Cref{theorem:main_result}. 

Lastly, we note that we do not intend this section to be an exhaustive list of corollaries. Online optimization is by now a very rich and mature field with sophisticated algorithms that in many ways improve on the ones we present here (in particular on their dimension, and $G,D$ dependence). We chose these due to their simplicity and to highlight some of their consequences for performativity.
Furthermore, in these corollaries we take expectation over the samples in $S$ for simplicity, but we could have stated the high probability version by using the second part of \Cref{theorem:main_result}. 

\section{Analyzing the Multiplayer Case}
\label{sec:multiplayer}

Our results so far considered the standard performative prediction setting with a single
learner. However, many prediction systems operate in shared environments with multiple
decision-makers, each deploying their own models. In traffic prediction, for instance,
commute times are jointly shaped by the routing recommendations of several navigation
apps. In digital advertising, user behavior is influenced by the combined choices of
platforms competing for attention. This dynamic was formalized by
\citet*{narang2023multiplayer} under the name of \emph{multiplayer} performative
prediction. As we now show, our reduction extends to this setting with essentially the
same proof.

\def\pl#1{^{#1}}
\def\pli{\pl{i}}
\def\pln{\pl{n}}

In the multiplayer problem, there are $n$ players. Each player $i \in [n]$ deploys a
model $\theta\pli \in \Theta\pli \subseteq \R^{d\pli}$, and the data observed by player $i$ is
drawn from a distribution that depends on \emph{all} of the deployed models. That is $z\pli \sim \cD\pli(\theta\pl1, \dots, \theta\pln)$ where $\cD\pli(\cdot)$ is the distribution map for player $i$ which depends on the choice of all $n$ players. Furthermore, each player has their own loss $\ell\pli(z\pli; \theta\pli)$. The natural generalization of
performative stability to this setting is the following.

\begin{definition}[Multiplayer performative stability]
\label{def:multi-ps}
A joint distribution $\mu$ over profiles $(\theta\pl1, \dots, \theta\pln) \in \Theta\pl1
\times \cdots \times \Theta\pln$ is $\eps$-performatively stable if for every player
$i \in [n]$,
\begin{align*}
    \E_{(\theta\pl1, \dots, \theta\pln) \sim \mu} \E_{z\pli \sim \cD\pli(\theta\pl1, \dots,
    \theta\pln)} \ell\pli(z\pli; \theta\pli) \leq \min_{\hat\theta\pli \in \Theta\pli}
    \E_{(\theta\pl1, \dots, \theta\pln) \sim \mu} \E_{z\pli \sim \cD\pli(\theta\pl1, \dots,
    \theta\pln)} \ell\pli(z\pli; \hat\theta\pli) + \eps.
\end{align*}
\end{definition}
That is, no player can unilaterally reduce their expected loss under $\mu$ by switching
to a different model. Note that $\mu$ is a distribution over joint profiles, not a
product of independent marginals. This is what allows us to capture the correlation
between players' iterates that naturally arises from their shared interaction history.

Our main result extends with essentially the same proof: if every player runs a
no-regret algorithm on their observed losses, then the uniform distribution over the
joint trajectory is performatively stable.

\begin{theorem}
\label{thm:multi}
Suppose each player $i \in [n]$ produces iterates $\theta\pli_1, \dots, \theta\pli_T$ using
a deterministic online algorithm that achieves sublinear regret $\Reg\pli(T)$ against the sequence of
losses $\ell\pli_t(\theta\pli) = \ell\pli(z\pli_t; \theta\pli)$, where $z\pli_t \sim \cD\pli(\theta\pl1_t,
\dots, \theta\pln_t)$ and each $\ell\pli$ is bounded. 

Let $\mu$ be the uniform distribution
over the joint profiles $\{(\theta\pl1_t, \dots, \theta\pln_t)\}_{t=1}^T$. Then, in
expectation over the draws from each $\cD\pli(\cdot)$, the mixture $\mu$ is $(\max_i \Reg\pli(T)/T)$-performatively stable.
\end{theorem}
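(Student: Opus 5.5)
The plan is to mirror the proof of \Cref{theorem:main_result}, carried out once for each player $i$ separately. Fix a player $i$ and an arbitrary comparator $\hat\theta\pli \in \Theta\pli$. Expanding the expectation over the uniform mixture $\mu$ on joint profiles, the quantity to bound is
\begin{align*}
\frac{1}{T}\sum_{t=1}^T \E_{z \sim \cD\pli(\theta\pl1_t,\dots,\theta\pln_t)}\Big[\ell\pli(z;\theta\pli_t) - \ell\pli(z;\hat\theta\pli)\Big].
\end{align*}
I will add and subtract the realized differences $\ell\pli(z\pli_t;\theta\pli_t)-\ell\pli(z\pli_t;\hat\theta\pli)$. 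This splits the quantity into two pieces. The first is a realized-regret term, which is at most $\Reg\pli(T)/T$ deterministically, since player $i$'s algorithm guarantees this against any (adaptively chosen) loss sequence. The second is an average of differences
\begin{align*}
V\pli_t = \E_{z\sim\cD\pli(\theta\pl1_t,\dots,\theta\pln_t)}\big[\ell\pli(z;\theta\pli_t)-\ell\pli(z;\hat\theta\pli)\big] - \big[\ell\pli(z\pli_t;\theta\pli_t)-\ell\pli(z\pli_t;\hat\theta\pli)\big].
\end{align*}

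The key step is showing that $V\pli_t$ is a martingale difference sequence with respect to a \emph{joint} filtration. I take $\cF_0 = \sigma(\theta\pl1_1,\dots,\theta\pln_1)$ and let $\cF_t$ be generated by all players' iterates up to round $t+1$ together with all samples $z\pl{j}_s$ for $s\le t$ and every $j$. Each player's algorithm is deterministic, and player $j$'s update depends only on $z\pl{j}_1,\dots,z\pl{j}_t$. Hence the entire profile $(\theta\pl1_{t},\dots,\theta\pln_{t})$ is $\cF_{t-1}$-measurable. Conditional on $\cF_{t-1}$, the sample $z\pli_t$ is a fresh draw from $\cD\pli(\theta\pl1_t,\dots,\theta\pln_t)$. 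This gives $\E[V\pli_t\mid\cF_{t-1}]=0$, and boundedness of $\ell\pli$ gives integrability. Therefore $\E[\sum_t V\pli_t]=0$.

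The main subtlety, and the step needing care, is this measurability point. The draws for different players in the same round may be correlated. That is harmless, because we only need the conditional marginal of $z\pli_t$ given $\cF_{t-1}$, and we never condition on other players' round-$t$ samples. The mixture $\mu$ being over joint profiles rather than product marginals is exactly what allows the per-round expectation to use the actual profile $(\theta\pl1_t,\dots,\theta\pln_t)$.

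Taking expectation over all draws, player $i$'s stability gap against $\hat\theta\pli$ is at most $\Reg\pli(T)/T$. Taking the minimum over $\hat\theta\pli$ is handled as in \Cref{theorem:main_result}: the bound holds for every fixed comparator, so it holds for the minimizer of the expected objective. Finally, taking the maximum over $i\in[n]$ yields $(\max_i \Reg\pli(T)/T)$-stability in the sense of \Cref{def:multi-ps}.
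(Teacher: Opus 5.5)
Your proposal is correct and matches the paper's proof essentially step for step. You use the same per-player add-and-subtract decomposition and the same joint filtration over full profiles and samples, under which $V^i_t$ is a martingale difference sequence. You also apply the deterministic regret bound against a fixed comparator in the same way and then take the maximum over players.
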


\begin{proof}
Fix a player $i \in [n]$ and an arbitrary comparator $\theta\pli_\star \in \Theta\pli$. For
brevity, let $\theta_t = (\theta\pl1_t, \dots, \theta\pln_t)$ and $z_t = (z\pl1_t, \dots,
z\pln_t)$. The goal is to upper bound
\begin{align*}
    \E_{(\theta\pl1, \dots, \theta\pln) \sim \mu} \E_{z\pli \sim \cD\pli(\theta\pl1, \dots,
    \theta\pln)} \big[\ell\pli(z\pli; \theta\pli) - \ell\pli(z\pli; \theta\pli_\star)\big]
    = \frac{1}{T}\sum_{t=1}^T \E_{z\pli \sim \cD\pli(\theta_t)}\big[\ell\pli(z\pli; \theta\pli_t)
    - \ell\pli(z\pli; \theta\pli_\star)\big],
\end{align*}
where the equality follows because $\mu$ is the uniform distribution over the joint
profiles. Following the proof of \Cref{theorem:main_result}, we add and subtract
$\ell\pli(z\pli_t; \theta\pli_t) - \ell\pli(z\pli_t; \theta\pli_\star)$ to obtain
\begin{align*}
    \underbrace{\frac{1}{T}\sum_{t=1}^T \E_{z\pli \sim \cD\pli(\theta_t)}\big[\ell\pli(z\pli;
    \theta\pli_t) - \ell\pli(z\pli; \theta\pli_\star) - (\ell\pli(z\pli_t; \theta\pli_t) -
    \ell\pli(z\pli_t; \theta\pli_\star))\big]}_{\text{small by a martingale argument}}
    + \underbrace{\frac{1}{T}\sum_{t=1}^T \big[\ell\pli(z\pli_t; \theta\pli_t) - \ell\pli(z\pli_t;
    \theta\pli_\star)\big]}_{\text{small by regret}}.
\end{align*}
Define $V\pli_t = \E_{z\pli \sim \cD\pli(\theta_t)}[\ell\pli(z\pli; \theta\pli_t) - \ell\pli(z\pli;
\theta\pli_\star)] - [\ell\pli(z\pli_t; \theta\pli_t) - \ell\pli(z\pli_t; \theta\pli_\star)]$. We show
that $V\pli_1, \dots, V\pli_T$ is a martingale difference sequence with respect to the
filtration $\{\mathcal{F}_t\}_{t \geq 0}$ defined by $\mathcal{F}_0 \defeq
\sigma(\theta_1)$ and
\begin{align*}
    \mathcal{F}_t \defeq \sigma(\theta_1, z_1, \theta_2, z_2, \dots, \theta_t, z_t,
    \theta_{t+1}) \quad \text{for } t \geq 1.
\end{align*}
By construction, $\theta_t$ (and hence $\theta\pli_t$) is
$\mathcal{F}_{t-1}$-measurable, and conditional on $\mathcal{F}_{t-1}$ the sample
$z\pli_t$ is an independent draw from $\cD\pli(\theta_t)$. Hence $V_t^i$ is
$\mathcal{F}_t$-measurable and $\E[V_t^i \mid \mathcal{F}_{t-1}] = 0$. Since losses are
bounded, $\E[|V_t^i|] < \infty$, and therefore $\E_S\big[\sum_{t=1}^T V_t^i\big] = 0$.

For the second term, the no-regret guarantee of player $i$'s algorithm gives
\begin{align*}
    \sum_{t=1}^T \ell\pli(z\pli_t; \theta\pli_t) - \ell\pli(z\pli_t; \theta\pli_\star) \leq
    \Reg\pli(T).
\end{align*}
Combining the two bounds and noting that the argument holds for any $\theta\pli_\star \in
\Theta\pli$ establishes the result for player $i$. Since $i$ was arbitrary, the bound
holds simultaneously for every player.
\end{proof}

Similarly to \cref{theorem:main_result}, the result can be extended to hold path-wise in high probability by using a concentration inequality and taking a union bound over all comparators and players.

\paragraph{Comparison to prior work.} \citet{narang2023multiplayer} established the first convergence results to multiplayer performative stability, showing that repeated retraining and gradient descent converge if every player's loss $\ell\pli$ is $\beta_i$-smooth and $\gamma_i$-strongly convex, every distribution map $\cD\pli(\cdot)$ is $\eps_i$-sensitive in the sense of \Cref{def:sensitivity}, and  $\sqrt{\sum_{i=1}^n (\beta_i \eps_i / \gamma_i)^2} < 1$. That is, the \emph{norm} across $n$ players of the individual parameters $\beta_i \eps_i / \gamma_i$ needs to be small, a condition which becomes harder to satisfy as the number of players $n$ increases.

\Cref{thm:multi} sidesteps this entirely. We require no smoothness or strong convexity on any $\ell\pli$, no sensitivity assumption on any $\cD\pli(\cdot)$, and in particular no joint condition relating constants across players. As long as each player runs a no-regret algorithm on their own observed losses, the uniform mixture over the joint trajectory is performatively stable, regardless of how many players there are or how strongly their predictions influence the shared environment.

\section{Discussion and Future Work}

In this work, we establish a reduction between no-regret learning in online optimization and performative stability. As a consequence of this connection, we expand the regimes of problems in which one can achieve this natural solution concept and provide an explanation for why common learning procedures do not produce runaway feedback loops in dynamic, social environments. 
We believe that there are a number of exciting directions to pursue in future work. 

First, as we discussed in \Cref{sec:preliminaries}, performatively stable models are fixed points of retraining, but they do not necessarily minimize the performative risk. Prior work by \cite{miller2021outside} and \cite{perdomo2025revisiting} shows that there exist worst-case instances where stable solutions in fact \emph{maximize} the performative risk. This begs a natural question: what structural conditions can we impose on $\cD(\cdot)$ to ensure that stable points are also optimal? Second, our results hold for the stateless performativity setting. However, in various settings such as recommender systems, the data  we see  (\emph{e.g.}, the videos people click on a video platform) is not just a function of the mostly recently deployed model, but of the entire history of deployed models that together have shaped their preferences. Are online algorithms also performatively stable in this stateful performative prediction setting (as formalized by \citet{brown2022performative})? 

\printbibliography

\end{document}